\def\Hr{\mathsf{H}}
\title{Graphon Neural Networks and the Transferability of Graph Neural Networks}
\author{%
  Luana Ruiz
\\
  Dept. of Electrical and Systems Eng.\\
  University of Pennsylvania\\
  Philadelphia, PA 19143 \\
  \texttt{rubruiz@seas.upenn.edu} \\
   \And
   Luiz F. O. Chamon \\
  Dept. of Electrical and Systems Eng.\\
  University of Pennsylvania\\
  Philadelphia, PA 19143 \\
  \texttt{luizf@seas.upenn.edu} \\
   \AND
   Alejandro Ribeiro \\
  Dept. of Electrical and Systems Eng.\\
  University of Pennsylvania\\
  Philadelphia, PA 19143 \\
  \texttt{aribeiro@seas.upenn.edu} \\
}
\begin{document}

\maketitle

\begin{abstract}
Graph neural networks (GNNs) rely on graph convolutions to extract local features from network data. These graph convolutions combine information from adjacent nodes using coefficients that are shared across all nodes.
Since these coefficients are shared and do not depend on the graph, one can envision using the same coefficients to define a GNN on another graph. This motivates analyzing the transferability of GNNs across graphs.
In this paper we introduce graphon NNs as limit objects of GNNs and prove a bound on the difference between the output of a GNN and its limit graphon-NN. This bound vanishes with growing number of nodes if the graph convolutional filters are bandlimited in the graph spectral domain. This result establishes a tradeoff between discriminability and transferability of GNNs. 
\end{abstract}


\section{Introduction}

Graph neural networks (GNNs) are the counterpart of convolutional neural networks  (CNNs) to learning problems involving network data. Like CNNs, GNNs have gained popularity due to their superior performance in a number of learning tasks \citep{kipf17-classifgcnn,defferrard17-cnngraphs,gama18-gnnarchit,bronstein17-geomdeep}. 
Aside from the ample amount of empirical evidence, GNNs are proven to work well because of properties such as invariance and stability \citep{ruiz19-inv,gama2019stability}, which are also shared with CNNs \citep{bruna2013invariant}.

A defining characteristic of GNNs is that their number of parameters does not depend on the size (i.e., the number of nodes) of the underlying graph. This is because graph convolutions are parametrized by graph shifts in the same way that time and spatial convolutions are parametrized by delays and translations. From a complexity standpoint, the independence between the GNN parametrization and the graph is beneficial because there are less parameters to learn. Perhaps more importantly, the fact that its parameters are not tied to the underlying graph suggests that a GNN can be transferred from graph to graph. It is then natural to ask to what extent the performance of a GNN is preserved when its graph changes. The ability to transfer a machine learning model with performance guarantees is usually referred to as transfer learning or \textit{transferability}.

In GNNs, there are two typical scenarios where transferability is desirable. The first involves applications in which we would like to reproduce a previously trained model on a graph of different size, but similar to the original graph in a sense that we formalize in this paper, with performance guarantees. 
This is useful when we cannot afford to retrain the model.
The second concerns problems where the network size changes over time. In this scenario, we would like the GNN model to be robust to nodes being added or removed from the network, i.e., for it to be transferable in a scalable way. An example are recommender systems based on a growing user network.

Both of these scenarios involve solving the same task on networks that, although different, can be seen as being of the same ``type''. 
This motivates studying the transferability of GNNs within families of graphs that share certain structural characteristics. We propose to do so by focusing on collections of graphs associated with the same \textit{graphon}.
A graphon is a bounded symmetric kernel $\bbW: [0,1]^2 \to [0,1]$ that can be interpreted a a graph with an uncountable number of nodes. Graphons are suitable representations of graph families because they are the limit objects of sequences of graphs where the density of certain structural ``motifs'' is preserved. They can also be used as generating models for undirected graphs where, if we associate nodes $i$ and $j$ with points $u_i$ and $u_j$ in the unit interval, $\bbW(u_i,u_j)$ is the weight of the edge $(i,j)$. The main result of this paper (Theorem \ref{thm:graph-graph}) shows that GNNs are transferable between deterministic graphs obtained from a graphon in this way. 

\textbf{Theorem} (GNN transferability, informal) Let $\bbPhi(\bbG)$ be a GNN with fixed parameters. Let $\bbG_{n_1}$ and $\bbG_{n_2}$ be deterministic graphs with $n_1$ and $n_2$ nodes obtained from a graphon $\bbW$. Under mild conditions, $\|\bbPhi(\bbG_{n_1})-\bbPhi(\bbG_{n_2})\|=\ccalO(n_1^{-0.5} + n_2^{-0.5})$.

An important consequence of this result is the existence of a trade-off between transferability and discriminability for GNNs on these deterministic graphs, which is related to a restriction on the passing band of the graph convolutional filters of the GNN. Its proof is based on the definition of the \textit{graphon neural network} (Section \ref{sec:wcns}), a theoretical limit object of independent interest that can be used to generate GNNs on deterministic graphs from a common family. The interpretation of graphon neural networks as generating models for GNNs is important because it identifies the graph as a flexible parameter of the learning architecture and allows adapting the GNN not only by changing its weights, but also by changing the underlying graph. While this result only applies to deterministic graphs instantiated from graphons (i.e., it does not encompass stochastic graphs sampled from the graphon, or sparser graphs, which are better modeled by \textit{graphings} \citep[Chapter 18]{lovasz2012large}), it provides useful insights on GNNs and on their transferability properties.

The rest of this paper is organized as follows. Section \ref{sec:related} goes over related work. Section \ref{sec:prelim} introduces preliminary definitions and discusses GNNs and graphon information processing. The aforementioned contributions are presented in Sections \ref{sec:wcns} and \ref{sec:transf_main}. In Section \ref{sec:sims}, transferability of GNNs is illustrated in two numerical experiments.
Concluding remarks are presented in Section \ref{sec:conc}, and proofs and additional numerical experiments are deferred to the supplementary material.




\section{Related Work} \label{sec:related}

Graphons and convergent graph sequences have been broadly studied in mathematics \citep{lovasz2012large,lovasz2006limits,borgs2008convergent,borgs2012convergent} and have found applications in statistics \citep{wolfe2013nonparametric,xu2017rates,gao2015rate}, game theory \citep{parise2019graphon}, network science \citep{avella2018centrality,vizuete2020laplacian} and controls \citep{gao2018graphon}. Recent works also use graphons to study network information processing in the limit \citep{ruiz2019graphon,ruiz2020graphon,morency2017signal}. In particular, \citet{ruiz2020graphon} study the convergence of graph signals and graph filters by introducing the theory of signal processing on graphons. The use of limit and continuous objects, e.g. neural tangent models \citep{jacot2018neural}, is also common in the analysis of the behavior of neural networks.

A concept related to transferability is the notion of stability of GNNs to graph perturbations. This is studied in \citep{gama2019stability} building on stability analyses of graph scattering transforms \citep{gama2019stabilityscat}.
the stability of GNNs on standard  
These results do not consider graphs of varying size. 
In contrast, \citet{keriven2020convergence} study stability of GNNs on random graphs.
Transferability as the number of nodes in a graph grows is analyzed in \citep{levie2019transferability}, following up on the work of \citet{elvin19-spectral} which studies the transferability of spectral graph filters. This work looks at graphs as discretizations of generic topological spaces, which yields a different asymptotic regime relative to the graphon limits we consider in this paper.


\section{Preliminary Definitions} \label{sec:prelim}


We go over the basic architecture of a graph neural network and formally introduce graphons and graphon data. These concepts will be important in the definition of graphon neural networks in Section \ref{sec:wcns} and in the derivation of a transferability bound for GNNs in Section \ref{sec:transf_main}.


\subsection{Graph neural networks}

GNNs are deep convolutional architectures with two main components per layer: a bank of graph convolutional filters or \textit{graph convolutions}, and a nonlinear activation function. The graph convolution couples the data with the underlying network, lending GNNs the ability to learn accurate representations of network data.

Networks are represented as graphs $\bbG=(\ccalV,\ccalE,w)$, where $\ccalV$, $|\ccalV|=n$, is the set of nodes, $\ccalE \subseteq \ccalV \times \ccalV$ is the set of edges and $w: \ccalE \to \reals$ is a function assigning weights to the edges of $\bbG$. We restrict attention to undirected graphs, so that $w(i,j)=w(j,i)$. Network data are modeled as \textit{graph signals} $\bbx \in \reals^n$, where each element $[\bbx]_i = x_i$ corresponds to the value of the data at node $i$ \citep{shuman13-mag,ortega2018graph}. In this setting, it is natural to model data exchanges as operations parametrized by the graph. This is done by considering the graph shift operator (GSO) $\bbS \in \reals^{n \times n}$, a matrix that encodes the sparsity pattern of $\bbG$ by satisfying $[\bbS]_{ij} = s_{ij} \neq 0$ only if $i=j$ or $(i,j) \in \ccalE$. In this paper, we use the adjacency matrix $[\bbA]_{ij}= w(i,j)$ as the GSO, but other examples include the degree matrix $\bbD=\mbox{diag}(\bbA \mathbf{1})$ and the graph Laplacian $\bbL=\bbD-\bbA$.

The GSO effects a \textit{shift}, or diffusion, of data on the network. Note that, at each node $i$, the operation $\bbS\bbx$ is given by $\sum_{j | (i,j) \in \ccalE} s_{ij}x_j$, i.e., nodes $j$ shift their data values to neighbors $i$ according to their proximity measured by $s_{ij}$. This notion of shift allows defining the convolution operation on graphs. In time or space, the filter convolution is defined as a weighted sum of data shifted through delays or translations. Analogously, we define the graph convolution as a weighted sum of data shifted to neighbors at most $K-1$ hops away. Explicitly,
\begin{equation} \label{eqn:graph_convolution}
\bbh *_{\bbS} \bbx = \sum_{k=0}^{K-1} h_k \bbS^k \bbx = \bbH(\bbS) \bbx
\end{equation}
where $\bbh = [h_0, \ldots h_{K-1}]$ are the filter coefficients and $*_{\bbS}$ denotes the convolution operation with GSO $\bbS$. Because the graph is undirected, $\bbS$ is symmetric and diagonalizable as $\bbS = \bbV \bbLam \bbV^{\Hr}$, where $\bbLam$ is a diagonal matrix containing the graph eigenvalues and $\bbV$ forms an orthonormal eigenvector basis that we call the graph spectral basis. Replacing $\bbS$ by its diagonalization in \eqref{eqn:graph_convolution} and calculating the change of basis $\bbV^\Hr \bbH(\bbS) \bbx$, we get
\begin{equation} \label{eqn:spec-lsi-gf}
\bbV^\Hr \bbH(\bbS)\bbx = \sum_{k=0}^{K-1} h_k \bbLam^k \bbV^\Hr \bbx = h(\bbLam) \bbV^\Hr \bbx
\end{equation}
from which we conclude that the graph convolution $\bbH(\bbS)$ has a spectral representation $h(\lambda)= \sum_{k=0}^{K-1} h_k \lambda^k$  which only depends on the coefficients $\bbh$ and on the eigenvalues of $\bbG$.

Denoting the nonlinear activation function $\rho$, the $\ell$th layer of a GNN is written as
\begin{equation} \label{eqn:gcn_layer}
\bbx^f_{\ell} = \rho \left( \sum_{g=1}^{F_{\ell-1}} \bbh_{\ell}^{fg} *_{\bbS} \bbx_{\ell-1}^g \right)
\end{equation}
for each feature $\bbx_{\ell}^f$, $1 \leq f \leq F_{\ell}$. The quantities $F_{\ell-1}$ and $F_{\ell}$ are the numbers of features at the output of layers $\ell-1$ and $\ell$ respectively for $1 \leq \ell \leq L$. The GNN output is $\bby^f = \bbx_L^f$, while the input features at the first layer, which we denote $\bbx^g_0$, are the input data $\bbx^g$ with $1 \leq g \leq F_0$. For a more succinct representation, this GNN can also be expressed as a map $\bby = \bbPhi(\ccalH; \bbS; \bbx)$, where the set $\ccalH$ groups all learnable parameters $\bbh_{\ell}^{fg}$ as $\ccalH = \{\bbh_{\ell}^{fg}\}_{\ell,f,g}$.

In \eqref{eqn:gcn_layer}, note that the GNN parameters $\bbh_{\ell}^{fg}$ do not depend on $n$, the number of nodes of $\bbG$. This means that, once the model is trained and these parameters are learned, the GNN can be used to perform inference on any other graph by replacing $\bbS$ in \eqref{eqn:gcn_layer}. In this case, the goal of transfer learning is for the model to maintain a similar enough performance in the same task over different graphs. A question that arises is then: for which graphs are GNNs transferable? To answer this question, we focus on graphs belonging to ``graph families'' identified by graphons.


\subsection{Graphons and graphon data} \label{sbs:graphs_graphons}

A graphon is a bounded, symmetric, measurable function~$\bbW: [0,1]^2 \to [0,1]$ that can be thought of as an undirected graph with an uncountable number of nodes. This can be seen by relating nodes $i$ and $j$ with points $u_i, u_j \in [0,1]$, and edges $(i,j)$ with weights $\bbW(u_i,u_j)$. This construction suggests a limit object interpretation and, in fact, it is possible to define sequences of graphs~$\{\bbG_n\}_{n=1}^\infty$ that converge to $\bbW$.

\subsubsection{Graphons as limit objects}

To characterize the convergence of a graph sequence $\{\bbG_n\}$, we consider arbitrary unweighted and undirected graphs $\bbF = (\ccalV', \ccalE')$ that we call ``graph motifs''. Homomorphisms of $\bbF$ into $\bbG = (\ccalV,\ccalE,w)$ are adjacency preserving maps in which~$(i,j)\in \ccalE'$ implies $(i,j)\in \ccalE$. There are $|\ccalV|^{|\ccalV'|} = n^{n'}$ maps from~$\ccalV'$ to $\ccalV$, but only some of them are homomorphisms. 
%
Hence, we can define a density of homomorphisms $t(\bbF,\bbG)$, which represents the relative frequency with which the motif $\bbF$ appears in $\bbG$.

%
%
Homomorphisms of graphs into graphons are defined analogously. Denoting $t(\bbF,\bbW)$ the density of homomorphisms of the graph $\bbF$ into the graphon $\bbW$, we then say that a sequence $\{\bbG_n\}$ converges to the graphon $\bbW$ if, for all finite, unweighted and undirected graphs $\bbF$,
\begin{equation} \label{eqn_graphon_convergence}
   \lim_{n\to\infty} t(\bbF,\bbG_n) = t(\bbF,\bbW).
\end{equation}
It can be shown that every graphon is the limit object of a convergent graph sequence, and every convergent graph sequence converges to a graphon \citep[Chapter 11]{lovasz2012large}. Thus, a graphon identifies an entire collection of graphs. Regardless of their size, these graphs can be considered similar in the sense that they belong to the same ``graphon family''. 

A simple example of convergent graph sequence is obtained by evaluating the graphon. In particular, in this paper we are interested in \textit{deterministic graphs} $\bbG_n$ constructed by assigning regularly spaced points $u_i = {(i-1)}/{n}$ to nodes $1 \leq i \leq n$ and weights $\bbW(u_i,u_j)$ to edges $(i,j)$, i.e.
\begin{equation} \label{eqn:deterministic_graph}
[\bbS_n]_{ij} = s_{ij} = \bbW(u_i,u_j)
\end{equation}
where $\bbS_n$ is the adjacency matrix of $\bbG_n$. An example of a stochastic block model graphon and of an $8$-node deterministic graph drawn from it are shown at the top of Figure \ref{fig:gnns_wnns}, from left to right. A sequence $\{\bbG_n\}$ generated in this fashion satisfies the condition in \eqref{eqn_graphon_convergence}, therefore $\{\bbG_n\}$ converges to $\bbW$ \citep[Chapter 11]{lovasz2012large}. 


\subsubsection{Graphon information processing}

Data on graphons can be seen as an abstraction of network data on graphs with an uncountable number of nodes. Graphon data is defined as graphon signals $X \in L_2([0,1])$ mapping points of the unit interval to the real numbers \citep{ruiz2019graphon}. The coupling between this data and the graphon is given by the integral operator $T_\bbW: L_2([0,1]) \to L_2([0,1])$, which is defined as
\begin{equation} \label{eqn:graphon_shift}
(T_\bbW X)(v) := \int_0^1 \bbW(u,v)X(u)du.
\end{equation}
Since $\bbW$ is bounded and symmetric, $T_\bbW$ is a self-adjoint Hilbert-Schmidt operator. This allows expressing the graphon in the operator's spectral basis as $\bbW(u,v) = \sum_{i \in \mbZ\setminus \{0\}} \lambda_i \varphi_i(u)\varphi_i(v)$ and rewriting $T_\bbW$ as
\begin{equation} \label{eqn:graphon_spectra}
(T_\bbW X)(v) = \sum_{i \in \mbZ\setminus \{0\}}\lambda_i \varphi_i(v) \int_0^1 \varphi_i(u)X(u)du
\end{equation}
where the eigenvalues $\lambda_i$, $i \in \mbZ\setminus \{0\}$, are ordered according to their sign and in decreasing order of absolute value, i.e. $1 \geq \lambda_1 \geq \lambda_2 \geq \ldots \geq \ldots \geq \lambda_{-2} \geq \lambda_{-1} \geq -1$, and accumulate around 0 as $|i| \to \infty$ \citep[Theorem 3, Chapter 28]{lax02-functional}. 

Similarly to the GSO, $T_\bbW$ defines a notion of shift on the graphon. We refer to it as the graphon shift operator (WSO), and use it to define the graphon convolution as a weighted sum of at most $K-1$ data shifts. Explicitly,
\begin{align}\begin{split} \label{eqn:lsi-wf}
&\bbh *_{\bbW} X = \sum_{k=0}^{K-1} h_k (T_{\bbW}^{(k)} X)(v) = (T_\bbH X)(v) \quad \mbox{with} \\
&(T_{\bbW}^{(k)}X)(v) = \int_0^1 \bbW(u,v)(T_\bbW^{(k-1)} X)(u)du
\end{split}\end{align}
where $T_{\bbW}^{(0)} = \bbI$ is the identity operator \citep{ruiz2020graphon}. The operation $*_{\bbW}$ stands for the convolution with graphon $\bbW$, and $\bbh = [h_0, \ldots, h_{K-1}]$ are the filter coefficients.  Using the spectral decomposition in \eqref{eqn:graphon_spectra}, $T_\bbH$ can also be written as
\begin{equation} \label{eqn:spec-graphon_filter}
(T_\bbH X)(v) = \sum_{i \in \mbZ\setminus \{0\}} \sum_{k=0}^{K-1} h_k \lambda_i^k \varphi_i(v) \int_0^1 \varphi_i(u)X(u)du = \sum_{i \in \mbZ\setminus \{0\}} h(\lambda_i) \varphi_i(v) \int_0^1 \varphi_i(u)X(u)du
\end{equation}
where we note that, like the graph convolution, $T_\bbH$ has a spectral representation $h(\lambda) = \sum_{k=0}^{K-1} h_k \lambda^k$ which only depends on the graphon eigenvalues and the coefficients $h_k$.







\section{Graphon Neural Networks} \label{sec:wcns}





Similarly to how sequences of graphs converge to graphons, we can think of a sequence of GNNs converging to a graphon neural network (WNN). This limit architecture is defined by a composition of layers consisting of graphon convolutions and nonlinear activation functions, tailored to process data supported on graphons. 
Denoting the nonlinear activation function $\rho$, the $\ell$th layer of a graphon neural network can be written as
\begin{equation}
X^f_{\ell} = \rho\left(\sum_{g=1}^{F_{\ell-1}} \bbh_{\ell}^{fg} *_\bbW X^g_{\ell-1} \right)
\end{equation}
for $1 \leq f \leq F_{\ell}$, where $F_\ell$ stands for the number of features at the output of layer $\ell$, $1 \leq \ell \leq L$. The WNN output is given by $Y^f = X_L^f$, and the input features at the first layer, $X_0^g$, are the input data $X^g$ for $1 \leq g \leq F_0$. 
A more succinct representation of this WNN can be obtained by writing it as the map $Y = \bbPhi(\ccalH; \bbW; X)$, where $\ccalH = \{\bbh_\ell^{fg}\}_{\ell,f,g}$ groups the filter coefficients at all layers. Note that the parameters in $\ccalH$ are agnostic to the graphon.

\subsection{WNNs as deterministic generating models for GNNs} \label{sbs:generating}

\begin{figure*}[t] 
  \centering
  \includegraphics[width=0.76\linewidth,height=4.5cm]{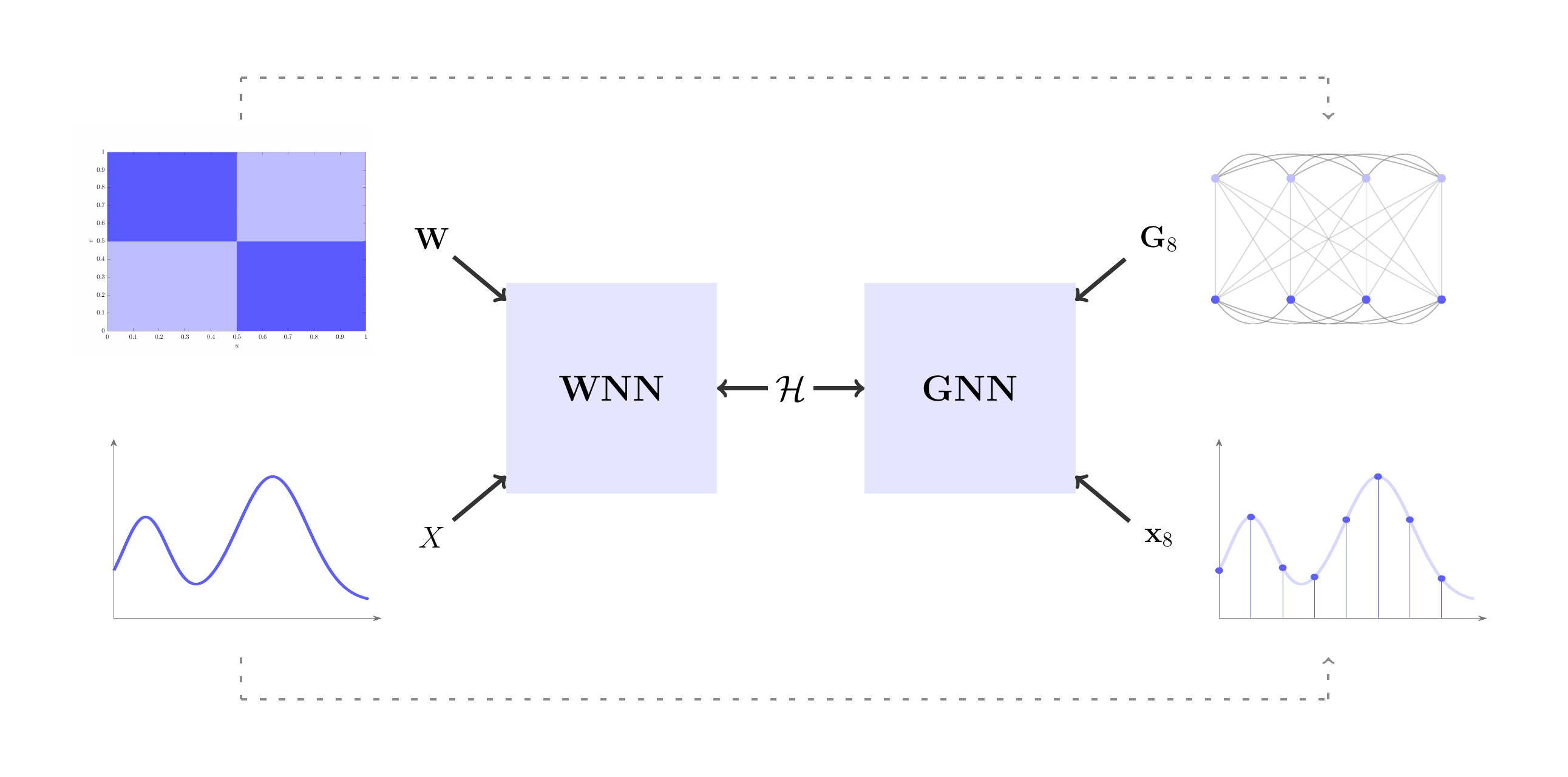}
  \caption{Example of a graphon neural network (WNN) given by $\bbPhi(\ccalH;\bbW;X)$, and of a graph neural network (GNN) instantiated from it as $\bbPhi(\ccalH;\bbS_8;\bbx_8)$. The graphon $\bbW$, shown on the top left corner, is a stochastic block model with intra-community probability $p_{c_ic_i}=0.8$ and inter-community probability $p_{c_ic_j}=0.2$, and the graphon signal $X$ is plotted on the bottom left corner. The graph $\bbG_8$ (top right corner) and the graph signal $\bbx_8$ (bottom right corner) are obtained from $\bbW$ and $X$ according to \eqref{eqn:det_gcn}. Note that the parameter set $\ccalH$ is shared between the WNN and the GNN.}
\label{fig:gnns_wnns}
\end{figure*}

Comparing the GNN and WNN maps $\bbPhi(\ccalH; \bbS; \bbx)$ [cf. Section \ref{sec:wcns}] and $\bbPhi(\ccalH; \bbW; X)$, we see that they can have the same set of parameters $\ccalH$. On graphs belonging to a graphon family, this means that GNNs can be built as instantiations of the WNN and, therefore, WNNs can be seen as generative models for GNNs. We consider GNNs $\bbPhi(\ccalH; \bbS_n; \bbx_n)$ built from a WNN $\bbPhi(\ccalH; \bbW; X)$ by defining $u_i = (i-1)/n$ for $1 \leq i \leq n$ and setting
\begin{align}\begin{split} \label{eqn:det_gcn}
&[\bbS_n]_{ij} = \bbW(u_i,u_j) \quad \mbox{and} \\
&[\bbx_n]_i = X(u_i)
\end{split}\end{align}
where $\bbS_n$ is the GSO of $\bbG_n$, the deterministic graph obtained from $\bbW$ as in Section \ref{sbs:graphs_graphons}, and $\bbx_n$ is the \textit{deterministic graph signal} obtained by evaluating the graphon signal $X$ at points $u_i$. An example of a WNN and of a GNN instantiated in this way are shown in Figure \ref{fig:gnns_wnns}.
Considering GNNs as instantiations of WNNs is interesting because it allows looking at graphs not as fixed GNN hyperparameters, but as parameters that can be tuned. I.e., it allows GNNs to be adapted both by optimizing the weights in $\ccalH$ and by changing the graph $\bbG_n$ to any graph that can be deterministically evaluated from the graphon as in \eqref{eqn:det_gcn}. This makes the learning model scalable and adds flexibility in cases where there are uncertainties associated with the graph but the graphon is known.

Conversely, we can also define WNNs induced by GNNs. The WNN induced by a GNN $\bbPhi(\ccalH; \bbS_n; \bbx_n)$ is defined as $\bbPhi(\ccalH; \bbW_{n}; X_n)$, and it is obtained by constructing a partition $I_1 \cup \ldots \cup I_n$ of $[0,1]$ with $I_i = [(i-1)/n,i/n]$ to define
\begin{align}\begin{split} \label{eqn:wcn_ind}
&\bbW_{n}(u,v) = {[\bbS_n]_{ij}} \times \mbI(u \in I_i)\mbI(v \in I_j) \quad \mbox{and} \\
&X_n(u) = [\bbx_n]_i \times \mbI(u \in I_i)
\end{split}\end{align}
where $\bbW_n$ is the \textit{graphon induced by} $\bbG_n$ and $X_n$ is the \textit{graphon signal induced by the graph signal} $\bbx_n$. This definition is useful because it allows comparing GNNs with WNNs.

\subsection{Approximating WNNs with GNNs}


Consider deterministic GNNs instantiated from a WNN as in \eqref{eqn:det_gcn}. For increasing $n$, $\bbG_n$ converges to $\bbW$, so we can expect the GNNs to become increasingly similar to the WNN. In other words, the output of the GNN $\bbPhi(\ccalH; \bbS_n; \bbx_n)$ and of the WNN $\bbPhi(\ccalH; \bbW; X)$ should grow progressively close and $\bbPhi(\ccalH; \bbS_n; \bbx_n)$ can be used to approximate $\bbPhi(\ccalH; \bbW; X)$. We wish to quantify how good this approximation is for different values of $n$. Naturally, the continuous output $Y = \bbPhi(\ccalH; \bbW; X)$ cannot be compared with the discrete output $\bby_n = \bbPhi(\ccalH; \bbS_n; \bbx_n)$ directly. In order to make this comparison, we consider the output of the WNN induced by $\bbPhi(\ccalH; \bbS_n; \bbx_n)$, which is given by $Y_n = \bbPhi(\ccalH; \bbW_{n}; X_n)$ [cf. \eqref{eqn:wcn_ind}]. We also consider the following assumptions.
\begin{assumption} \label{as1}
The graphon $\bbW$ is $A_1$-Lipschitz, i.e. $|\bbW(u_2,v_2)-\bbW(u_1,v_1)| \leq A_1(|u_2-u_1|+|v_2-v_1|)$.
\end{assumption} 
\begin{assumption} \label{as2}
The convolutional filters $h$ are $A_2$-Lipschitz and non-amplifying, i.e. $|h(\lambda)|<1$.
\end{assumption} 
\begin{assumption} \label{as3}
The graphon signal $X$ is $A_3$-Lipschitz.
\end{assumption}
\begin{assumption} \label{as4}
The activation functions are normalized Lipschitz, i.e. $|\rho(x)-\rho(y)| \leq |x-y|$, and $\rho(0)=0$.
\end{assumption} 
\begin{theorem}[WNN approximation by GNN]  \label{thm:graphon-graph}
Consider the $L$-layer WNN given by $Y=\bbPhi(\ccalH; \bbW; \bbX)$, where $F_0=F_L=1$ and $F_\ell=F$ for $1 \leq \ell \leq L-1$. Let the graphon convolutions $h(\lambda)$ [cf. \eqref{eqn:spec-graphon_filter}] be such that $h(\lambda)$ is constant for $|\lambda| < c$. For the GNN instantiated from this WNN as $\bby_n = \bbPhi(\ccalH; \bbS_n; \bbx_n)$ [cf.  \eqref{eqn:det_gcn}], under Assumptions \ref{as1} through \ref{as4} it holds
\begin{equation*}
\|Y_n-Y\|_{L_2} \leq LF^{L-1}\sqrt{A_1}\left(A_2 + \frac{\pi n_c}{\delta_c}\right)n^{-\frac{1}{2}}\|X\|_{L_2} + \frac{A_3}{\sqrt{3}}n^{-\frac{1}{2}}
\end{equation*}
where $Y_n = \bbPhi(\ccalH; \bbW_{n}; X_n)$ is the WNN induced by $\bby_n = \bbPhi(\ccalH; \bbS_n; \bbx_n)$ [cf. \eqref{eqn:wcn_ind}], $n_c$ is the cardinality of the set $\ccalC =\{i\ |\ |\lambda^{n}_i| \geq c\}$, and $\delta_c = \min_{i \in \ccalC} (|\lambda_i - \lambda^{n}_{i+\mbox{\scriptsize sgn}(i)}|,|\lambda_{i+\mbox{\scriptsize sgn}(i)}-\lambda^{n}_i|, |\lambda_1-\lambda^n_{-1}|,|\lambda^n_1-\lambda_{-1}|)$, with $\lambda_i$ and $\lambda^n_i$ denoting the eigenvalues of $\bbW$ and $\bbW_n$ respectively and each index $i$ corresponding to a different eigenvalue.
\end{theorem}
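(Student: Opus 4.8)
The plan is to reduce the $L$-layer estimate to a single-filter bound and then propagate the error layer by layer; the $L_2$ comparison is made possible by measuring everything against the induced WNN $Y_n=\bbPhi(\ccalH;\bbW_n;X_n)$ rather than the discrete output directly. The core object is the single graphon-convolution discrepancy $\|(T_\bbH-T_{\bbH_n})X\|_{L_2}$, for which I would prove the spectral estimate $\|(T_\bbH-T_{\bbH_n})X\|_{L_2}\le\bigl(A_2+\pi n_c/\delta_c\bigr)\,\|T_\bbW-T_{\bbW_n}\|\,\|X\|_{L_2}$. Expanding both filters in their own eigenbases [cf. \eqref{eqn:spec-graphon_filter}] and splitting the spectrum at $|\lambda|=c$, on the finite high-pass set $\ccalC=\{i:|\lambda_i^n|\ge c\}$ I control the scalar response through the filter Lipschitz bound $|h(\lambda_i)-h(\lambda_i^n)|\le A_2|\lambda_i-\lambda_i^n|$ together with Weyl's inequality $|\lambda_i-\lambda_i^n|\le\|T_\bbW-T_{\bbW_n}\|$, yielding the $A_2$ contribution, while the eigenvector misalignment is controlled by the Davis--Kahan $\sin\Theta$ theorem, $\|\sin\Theta\|\le\|T_\bbW-T_{\bbW_n}\|/\delta_c$, which summed with a $\pi/2$ angle-to-sine factor over the $n_c$ tracked eigenvectors produces the $\pi n_c/\delta_c$ term.

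The main obstacle is precisely the low-pass band $\{|\lambda|<c\}$: there the eigenvalues of $\bbW$ accumulate at $0$, their gaps vanish, and Davis--Kahan gives no control whatsoever. The resolution, and the reason Assumption that $h$ is constant for $|\lambda|<c$ is imposed, is that on this band the filter acts as a single scalar $h_{\mathrm{const}}$ times the orthogonal projection onto the corresponding subspace. Writing that projection as $\mathbf{I}$ minus the projection onto the finite-dimensional high-pass subspace collapses the entire low-pass contribution back to the same finite-dimensional eigenspace perturbation already handled above, so no vanishing gap ever enters the bound. This is the step that makes the estimate finite despite the spectral accumulation, and it is the crux of the whole argument.

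It then remains to bound the two discretization errors from the Lipschitz hypotheses: the operator sampling error $\|T_\bbW-T_{\bbW_n}\|$ is of order $\sqrt{A_1}\,n^{-1/2}$ by a Hilbert--Schmidt estimate of the piecewise-constant sampling of $\bbW$ (Assumption \ref{as1}), and the signal error $\|X-X_n\|_{L_2}$ is of order $(A_3/\sqrt{3})\,n^{-1/2}$ (Assumption \ref{as3}). Finally I would propagate the error through the network: writing the per-layer, per-feature difference and using that $\rho$ is normalized Lipschitz with $\rho(0)=0$ (Assumption \ref{as4}) to discard the nonlinearity, each feature error splits into a filter-approximation term, bounded by the single-filter estimate, and a propagated-input term $\|T_{\bbH_n}(X^g_{\ell-1}-X^g_{n,\ell-1})\|\le\|X^g_{\ell-1}-X^g_{n,\ell-1}\|$ via the non-amplifying bound $|h(\lambda)|<1$ (Assumption \ref{as2}). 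The same non-amplifying property keeps the intermediate signal norms controlled by $\|X\|_{L_2}$, so unrolling the recursion over $L$ layers with up to $F$ features each contributes the additive factor $L$ and the multiplicative factor $F^{L-1}$, assembling into the stated bound.
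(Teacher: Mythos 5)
Your proposal is correct and follows essentially the same route as the paper: the same four ingredients (Lipschitz sampling bounds $\|\bbW-\bbW_n\|_{L_2}\leq\sqrt{A_1/n}$ and $\|X-X_n\|_{L_2}\leq A_3/\sqrt{3n}$, a Weyl-type eigenvalue perturbation bound, and a Davis--Kahan projection bound giving the $\pi n_c/\delta_c$ term), the same split of the single-filter error into an eigenvalue term and an eigenvector-misalignment term, and the same layer-by-layer recursion using the normalized-Lipschitz nonlinearity and non-amplifying filters to produce the $LF^{L-1}$ factor. The only cosmetic difference is in the constant low-pass band, where the paper decomposes $h$ into a vanishing-band filter plus a constant bandpass filter while you write the low-pass projector as the identity minus the finite-dimensional high-pass projector; both reduce that band to the already-controlled finite-dimensional perturbation, and your phrasing is arguably the more explicit of the two.
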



From Theorem \ref{thm:graphon-graph}, we conclude that a graphon neural network $\bbPhi(\ccalH; \bbW; X)$ can be approximated with performance guarantees by the GNN $\bbPhi(\ccalH; \bbS_n; \bbx_n)$ where the graph $\bbG_n$ and the signal $\bbx_n$ are deterministic and obtained from $\bbW$ and $X$ as in \eqref{eqn:det_gcn}. 
In this case, the approximation error $\|Y_n-Y\|_{L_2}$ is controlled by the transferability constant ${LF^{L-1}\sqrt{A_1}}\left(A_2 + {(\pi n_c)}/{\delta_c}\right)n^{-0.5}$ and the fixed error term $A_3/\sqrt{3n}$, both of which decay with $\ccalO(n^{-0.5})$. 

The fixed error term only depends on the graphon signal variability $A_3$, and measures the difference between the graphon signal $X$ and the graph signal $\bbx_n$. 
The transferability constant depends on the graphon and on the parameters of the GNN.
The dependence on the graphon is related to the Lipschitz constant $A_1$, which is smaller for graphons with less variability. The dependence on the architecture happens through the numbers of layers and features $L$ and $F$, as well as through the parameters $A_2$, $n_c$ and $\delta_c$ of the graph convolution. Although these parameters can be tuned, note that, in general, deeper and wider architectures have larger approximation error.

For better approximation, the convolutional filters $h$ should have limited variability, which is controlled by both the Lipschitz constant $A_2$ and the length of the band $[c,1]$. The number of eigenvalues $n_c$ should satisfy $n_c \ll n$ (i.e. $n_c < \sqrt{n}$) for asymptotic convergence, which is guaranteed by the fact that the eigenvalues of $\bbW_{\bbG_n}$ converge to the eigenvalues of $\bbW$ \citep[Chapter 11.6]{lovasz2012large} and therefore $\delta_c \to \min_{i \in \ccalC}|\lambda_i-\lambda_{i+\mbox{\scriptsize sgn(\textit{i})}}|$, which is the minimum eigengap of the graphon for $i \in \ccalC$. 

We also point out that Assumption \ref{as4} and the assumption that the convolutional filters have height $|h(\lambda)|<1$ are not too strong; Assumption \ref{as4} holds for most conventional activation functions (e.g. ReLU, hyperbolic tangent) and $|h(\lambda)|<1$ can be achieved by normalization.



\section{Transferability of Graph Neural Networks} \label{sec:transf_main}


The main result of this paper is that GNNs are transferable between graphs of different sizes obtained deterministically from the same graphon [cf. \eqref{eqn:det_gcn}]. 
This result follows from Theorem \ref{thm:graphon-graph} by the triangle inequality. 

\begin{theorem}[GNN transferability]  \label{thm:graph-graph}
Let $\bbG_{n_1}$ and $\bbG_{n_2}$, and $\bbx_{n_1}$ and $\bbx_{n_2}$, be graphs and graph signals obtained from the graphon $\bbW$ and the graphon signal $X$ as in \eqref{eqn:det_gcn}, with $n_1 \neq n_2$. Consider the $L$-layer GNNs given by $\bbPhi(\ccalH;\bbS_{n_1};\bbx_{n_1})$ and $\bbPhi(\ccalH;\bbS_{n_2};\bbx_{n_2})$, where $F_0=F_L=1$ and $F_\ell = F$ for $1 \leq \ell \leq L-1$. Let the graph convolutions $h(\lambda)$ [cf. \eqref{eqn:spec-lsi-gf}] be such that $h(\lambda)$ is constant for $|\lambda| < c$. Then, under Assumptions \ref{as1} through \ref{as4} it holds
\begin{equation*}
\|Y_{n_1}-Y_{n_2}\|_{L_2} \leq LF^{L-1}\sqrt{A_1}\left(A_2 + \frac{\pi n_{c}'}{\delta_{c}'}\right)\left({n_1}^{-\frac{1}{2}}+{n_2}^{-\frac{1}{2}}\right)\|X\|_{L_2} + \frac{A_3}{\sqrt{3}}\left(n_1^{-\frac{1}{2}}+ n_2^{-\frac{1}{2}}\right)
\end{equation*}
where $Y_{n_j} = \bbPhi(\ccalH; \bbW_{n_j}; X_{n_j})$ is the WNN induced by $\bby_{n_j} = \bbPhi(\ccalH; \bbS_{n_j}; \bbx_{n_j})$ [cf. \eqref{eqn:wcn_ind}], $n_c' = \max_{j \in\{1,2\}} |\ccalC_j|$ is the maximum cardinality of the sets $\ccalC_j = \{i\ |\ |\lambda^{n_j}_i| \geq c\}$, and $\delta_c' = \min_{i \in \ccalC_j, j \in \{1,2\}}(|\lambda_i - \lambda^{n_j}_{i+\mbox{\scriptsize sgn}(i)}|,|\lambda_{i+\mbox{\scriptsize sgn}(i)}-\lambda^{n_j}_i|, |\lambda_1-\lambda^{n_j}_{-1}|,|\lambda^{n_j}_1-\lambda_{-1}|)$, with $\lambda_i$ and $\lambda^{n_j}_i$ denoting the eigenvalues of $\bbW$ and $\bbW_{n_j}$ respectively and each index $i$ corresponding to a different eigenvalue.
\end{theorem}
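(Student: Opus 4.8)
The plan is to reduce the claim to two applications of Theorem~\ref{thm:graphon-graph} joined by the triangle inequality, using the common WNN output as an intermediate anchor. Since both $\bbG_{n_1}$ and $\bbG_{n_2}$ are instantiated from the \emph{same} graphon $\bbW$ and the \emph{same} graphon signal $X$ via \eqref{eqn:det_gcn}, the induced WNN outputs $Y_{n_1}=\bbPhi(\ccalH;\bbW_{n_1};X_{n_1})$ and $Y_{n_2}=\bbPhi(\ccalH;\bbW_{n_2};X_{n_2})$ can each be compared against the single continuous output $Y=\bbPhi(\ccalH;\bbW;X)$. The triangle inequality in $L_2$ then yields
\begin{equation*}
\|Y_{n_1}-Y_{n_2}\|_{L_2} \leq \|Y_{n_1}-Y\|_{L_2} + \|Y-Y_{n_2}\|_{L_2}.
\end{equation*}

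First I would bound each summand by Theorem~\ref{thm:graphon-graph}. The GNN $\bby_{n_j}=\bbPhi(\ccalH;\bbS_{n_j};\bbx_{n_j})$ inherits Assumptions~\ref{as1}--\ref{as4} and the bandlimiting hypothesis ($h$ constant for $|\lambda|<c$) directly from the shared architecture, so for each $j\in\{1,2\}$ Theorem~\ref{thm:graphon-graph} gives
\begin{equation*}
\|Y_{n_j}-Y\|_{L_2} \leq LF^{L-1}\sqrt{A_1}\left(A_2 + \frac{\pi |\ccalC_j|}{\delta_{c,j}}\right)n_j^{-\frac{1}{2}}\|X\|_{L_2} + \frac{A_3}{\sqrt{3}}n_j^{-\frac{1}{2}},
\end{equation*}
where $\ccalC_j=\{i\,|\,|\lambda^{n_j}_i|\geq c\}$ and $\delta_{c,j}$ is the per-graph eigengap quantity of Theorem~\ref{thm:graphon-graph} evaluated for $\bbW_{n_j}$.

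Next I would replace the per-graph spectral constants with the uniform worst-case quantities $n_c'=\max_{j}|\ccalC_j|$ and $\delta_c'=\min_{i\in\ccalC_j,\,j\in\{1,2\}}(\cdots)$ appearing in the statement. The coefficient $A_2+\pi n_c/\delta_c$ is increasing in $n_c$ and decreasing in $\delta_c$, so substituting the larger cardinality $n_c'\geq|\ccalC_j|$ and the smaller gap $\delta_c'\leq\delta_{c,j}$ only enlarges each bound. Summing the two resulting inequalities over $j\in\{1,2\}$ and factoring out the now-common constants collects the rates into $n_1^{-1/2}+n_2^{-1/2}$, which is precisely the claimed inequality.

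The only genuine subtlety—and the single step I would verify carefully—is this uniformization: I must confirm that the minimum defining $\delta_c'$ is taken over the union of the index sets for both graphs and over all four eigengap terms, so that $\delta_c'\leq\delta_{c,j}$ holds simultaneously for $j=1,2$, and likewise that $n_c'$ dominates both $|\ccalC_1|$ and $|\ccalC_2|$. Everything else is a mechanical combination requiring no analytic estimate beyond Theorem~\ref{thm:graphon-graph}, which is why the transferability result is presented as an immediate corollary rather than a standalone argument.
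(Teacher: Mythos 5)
Your proposal is correct and is essentially identical to the paper's own proof: the paper likewise writes $\|Y_{n_1}-Y_{n_2}\|_{L_2}\leq\|Y_{n_1}-Y\|_{L_2}+\|Y-Y_{n_2}\|_{L_2}$, applies Theorem~\ref{thm:graphon-graph} to each summand, and then passes to the uniform constants $n_c'$ and $\delta_c'$. Your explicit check that $n_c'$ dominates both cardinalities and $\delta_c'$ is a lower bound for both per-graph gaps is the right (and only) point of care, and it is handled the same way in the paper.
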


Theorem \ref{thm:graph-graph} compares the vector outputs of the same GNN (with same parameter set $\ccalH$) on $\bbG_{n_1}$ and $\bbG_{n_2}$ by bounding the $L_2$ norm difference between the graphon neural networks induced by $\bby_{n_1} = \bbPhi(\ccalH;\bbS_{n_1};\bbx_{n_1})$ and by $\bby_{n_2} = \bbPhi(\ccalH;\bbS_{n_2};\bbx_{n_2})$. This result is useful in two important ways. First, it means that, provided that its design parameters are chosen carefully, a GNN trained on a given deterministic graph can be transferred to other deterministic graphs in the same graphon family [cf. \eqref{eqn:det_gcn}] with performance guarantees. This is desirable in problems where the same task has to be replicated on different instances of such deterministic networks, because it may help avoid retraining the GNN on each graph.
Second, it implies that GNNs are scalable, as the graph on which a GNN is trained can be smaller than the deterministic graphs on which it is deployed, and vice-versa. This is helpful in problems where the graph size may change.
In this case, the advantage of transferability is mainly that training GNNs on smaller graphs is easier than training them on large graphs.

When transferring GNNs between deterministic graphs, the performance guarantee is measured by the transferability constant ${LF^{L-1}\sqrt{A_1}}(A_2 + {\pi n_{c}'}/{\delta_{c}'})({n_1}^{-0.5}+{n_2}^{-0.5})$ and the fixed error term $A_3(n_1^{-0.5}+{n_2}^{-0.5})/\sqrt{3}$. Both of these terms decay with $\ccalO(1/\sqrt{\min{n_1,n_2}})$, i.e., the transferability bound is dominated by the size of the smaller graph. As such, when one of the graphs is small the transferability bound may not be very good, even if the other graph is large.  

The fixed error term measures the difference between the graph signals $\bbx_{n_1}$ and $\bbx_{n_2}$ through their distance to the graphon signal $X$, so its contribution is small if both signals are associated with the same data model. Importantly, the transferability constant depends 
implicitly on the graphon varibaility through $A_1$.
It also depends on the width $F$ and depth $L$ of the GNN, and on the convolutional filter parameters $A_2$, $n_c'$ and $\delta_c'$. These are design parameters which can be tuned. In particular, if we make $n_c' < \sqrt{n_1}$, Theorem \ref{thm:graph-graph} implies that a GNN trained on the deterministic graph $\bbG_{n_1}$ is asymptotically transferable to any deterministic graph $\bbG_{n_2}$ in the same family where $n_2>n_1$. This is because, as $n_1, n_2 \to \infty$, the term $\delta_c'$ converges to $\min_{i \in \ccalC_j, j \in \{1,2\}}|\lambda_i-\lambda_{i+\mbox{\scriptsize sgn(i)}}|$, which is a fixed eigengap of $\bbW$. On the other hand, the restriction on $n_c'$ reflects a restriction on the passing band of the graph convolutions, suggesting a trade-off between the transferability and discriminability of GNNs on deterministic graphs evaluated from the same graphon. 

Explicitly, in the interval $|\lambda| \in [c,1]$, the filters $h(\lambda)$ do not have to be constant and can thus distinguish between eigenvalues. Hence, their discriminative power is larger for smaller $c$. 
{The value of $n_c'$ counts the eigenvalues~$|\lambda| \in [c,1]$. When $c$ is small, $n_c'$ is large, which worsens the transfer error. Since the quantity $\delta_c'$ is close to the minimum graphon eigengap $\min_{i \in \ccalC_j, j \in \{1,2\}}|\lambda_i-\lambda_{i+\mbox{\scriptsize sgn(i)}}|$ and the graphon eigenvalues accumulate near 0, $\delta_c'$ approaches 0 for small $c$. This also increases the transfer error. Therefore, small values of $c$ provide better discriminability but deteriorate transferability, and vice-versa.} Still, note that for GNNs this trade-off is less rigid than for graph filters because GNNs leverage nonlinearities to scatter low frequency components ($|\lambda| \approx 0$) to larger frequencies ($|\lambda| > c$).




\section{Numerical Results} \label{sec:sims}

In the following sections we illustrate the GNN transferability result on a recommendation system experiment and on the Cora dataset.

\subsection{Movie recommendation}

To illustrate Theorem \ref{thm:graph-graph} in a graph signal classification setting, we consider the problem of movie recommendation using the MovieLens 100k dataset \citep{harper16-movielens}. This dataset contains 100,000 integer ratings between 1 and 5 given by $U=943$ users to $M=1682$ movies. Each user is seen as a node of a user similarity network, and the collection of user ratings to a given movie is a signal on this graph. To build the user network, a $U \times M$ matrix $\bbR$ is defined where $[\bbR]_{um}$ is the rating given by user $u$ to movie $m$, or $0$ if this rating does not exist. The proximity between users $u_i$ and $u_j$ is then calculated as the pairwise correlation between rows $\bbr_{u_i} = [\bbR]_{u_i:}$ and $\bbr_{u_j} = [\bbR]_{u_j:}$, and each user is connected to its 40 nearest neighbors. The graph signals are the columns vectors $\bbr_{m}=[\bbR]_{:m}$ consisting of the user ratings to movie $m$.

Given a network with $n$ users, we implement a GNN\footnote{We use the GNN library available at \url{https://github.com/alelab-upenn/graph-neural-networks} and implemented with PyTorch.} with the goal of predicting the ratings given by user 405, which is the user who has rated the most movies in the dataset (737 ratings). This GNN has $L=1$ convolutional layer with $F=32$ and $K=5$, followed by a readout layer at node $405$ that maps its features to a one-hot vector of dimension $C=5$ (corresponding to ratings 1 through 5). To generate the input data, we pick the movies rated by user 405 and generate the corresponding movie signals by "zero-ing" out the ratings of user 405 while keeping the ratings given by other users. This data is then split between $90\%$ for training and $10\%$ for testing, with $10\%$ of the training data used for validation. Only training data is used to build the user network in each split.

To analyze transferability, we start by training GNNs $\bbPhi(\ccalH_n;\bbS_n;\bbx_n)$ on user subnetworks consisting of random groups of $n=100,200,\ldots,900$ users, including user 405. We optimize the cross-entropy loss using ADAM with learning rate $10^{-3}$ and decaying factors $\beta_1 = 0.9$ and $\beta_2 = 0.999$, and keep the models with the best validation RMSE over 40 epochs. Once the weights $\ccalH_n$ are learned, we use them to define the GNN $\bbPhi(\ccalH_n;\bbS_U;\bbx_{U})$, and test both $\bbPhi(\ccalH_n;\bbS_n;\bbx_n)$ and $\bbPhi(\ccalH_n;\bbS_U;\bbx_{U})$ on the movies in the test set. 
The evolution of the difference between the test RMSEs of both GNNs, relative to the RMSE obtained on the subnetworks $\bbG_n$, is plotted in Figure \ref{fig:sims} for 50 random splits. Note that this difference, which is proportional to $\|\bbPhi(\ccalH;\bbW_n;X_n)-\bbPhi(\ccalH;\bbW_U;\bbX_U)\|$, decreases as the size of the subnetwork increases, following the asymptotic behavior described in Theorem \ref{thm:graph-graph}.

\begin{figure*}[t] 
  \centering
  \includegraphics[height=0.33\textwidth]{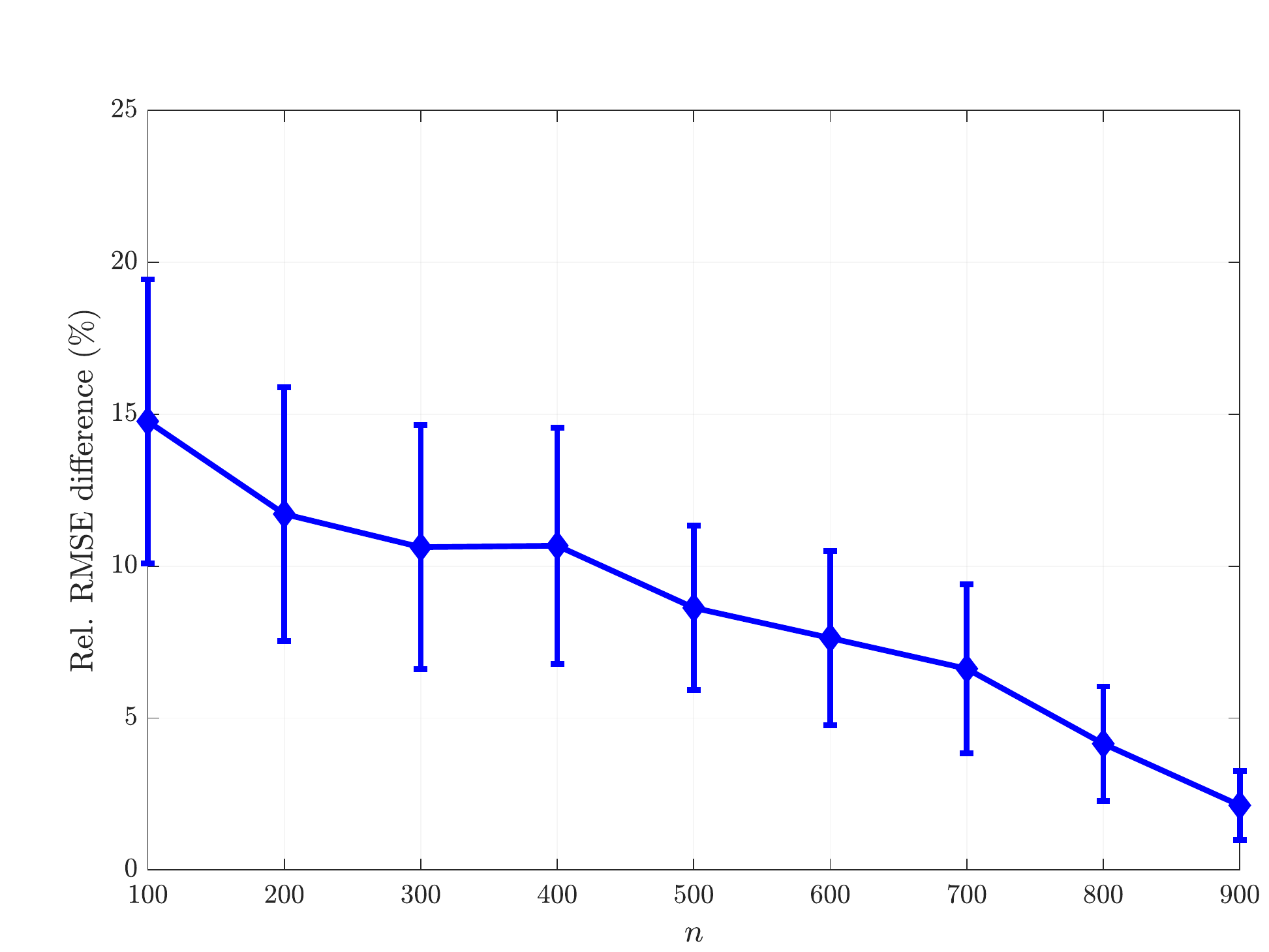}
  \caption{Relative RMSE difference on the test set for the GNNs $\bbPhi(\ccalH;\bbW_n;X_n)$ and $\bbPhi(\ccalH;\bbW_U;\bbX_U)$. Average over 50 random splits. Error bars have been scaled by $0.5$.}
\label{fig:sims}
\end{figure*} 

\subsection{Cora citation network}

Next, we consider a node classification experiment on the Cora dataset. The Cora dataset consists of 2708 scientific publications connected through a citation network and described by 1433 features each. The goal of the experiment is to classify these publications into 7 classes. Our base data split is the same of \citep{kipf17-classifgcnn}, with 140 nodes for training, 500 for validation, and 1000 for testing. The remaining nodes are present in the network, but their labels are not taken into account.

To analyze transferability, we consider three scenarios in which these sets have their number of samples reduced by factors of 10, 5 and 2. The corresponding citation subgraphs have $n=271$, $542$ and $1354$ nodes. In each scenario, GNNs consisting of $L=1$ convolutional layer with $F_1=8$ and $K_1=2$, and one readout layer mapping each node's features to a one-hot vector of dimension $C=7$, are trained using ADAM with learning rate $5 \times 10^{-3}$ and decaying factors $\beta_1 = 0.9$ and $\beta_2 = 0.999$ over 150 epochs. They are then transferred to the full 2708-node citation network to be tested on the full 1000-node test set. The average test classification accuracies on the citation subnetworks and on the full citation network are presented in Table \ref{table1} for 10, 5 and 2 random realizations of each corresponding scenario. We also report the average relative difference in accuracy with respect to the classification accuracy on the subgraphs. As $n$ increases, we see the relative accuracy difference decrease, indicating the asymptotic behavior of Theorem \ref{thm:graph-graph}.

\begin{table}[h]
  \caption{Average test accuracy on subnetworks and full network for different values of $n$. Average relative accuracy difference.}
  \label{table1}
  \centering
  {\small
  \vspace{0.35cm}
  \begin{tabular}{lccc}
    \toprule
\textbf{Number of nodes}                & $n=271$   & $n=542$   & $n=1354$  \\
\midrule
Accuracy on subnetwork $\bbG_n$   & 25.3\% & 29.5\% & 36.6\% \\
Accuracy on full network & 27.9\% & 35.4\% & 41.8\%  \\
\midrule
Relative accuracy difference  & 23.1\% & 22.5\% & 15.4\%  \\
    \bottomrule
  \end{tabular}
  }
\end{table}


\section{Conclusions} \label{sec:conc}

We have introduced WNNs and shown that they can be used as generating models for GNNs on deterministic graphs evaluated from a graphon [cf. \eqref{eqn:det_gcn}]. We have also demonstrated that these GNNs can be used to approximate WNNs arbitrarily well, with an approximation error that decays asymptotically with $\ccalO(n^{-0.5})$. This result was further used to prove transferability of GNNs on deterministic graphs associated with the same graphon. 
In particular, GNN transferabilty increases asymptotically with $\ccalO(n_1^{-0.5} + n_2^{-0.5})$ for graph convolutional filters with small passing bands, suggesting a trade-off between representation power and stability for GNNs defined on deterministic graphs. GNN transferability was further demonstrated in numerical experiments of graph signal classification and node classification. Future research directions include extending the GNN transferability result to stochastic graphs sampled from graphons.

\section*{Broader Impact}

A very important implication of GNN transferability is allowing learning models to be replicated in different networks without the need for redesign. This can potentially save both data and computational resources.
However, since our work utilizes standard training procedures of graph neural networks, it may inherit any potential biases present in supervised training methods, e.g. data collection bias.

\begin{ack}
The work in this paper was supported by NSF CCF 1717120, ARO W911NF1710438, ARL DCIST CRA W911NF-17-2-0181, ISTC-WAS and Intel DevCloud. 
\end{ack}


\small

\bibliographystyle{abbrvnat}
\bibliography{myIEEEabrv,bib-graphon}

\section*{Proof of Theorem \ref{thm:graphon-graph}}

To prove Theorem \ref{thm:graphon-graph}, we interpret graphon convolutions as generative models for graph convolutions. Given the graphon $\bbW(u,v) = \sum_{i \in \mbZ\setminus \{0\}} \lambda_i \varphi_i(u)\varphi_i(v)$ and a graphon convolution $Y= T_\bbH X$ written as
\begin{equation*} 
(T_\bbH X)(v) = \sum_{i \in \mbZ\setminus \{0\}} h(\lambda_i) \varphi_i(v) \int_0^1 \varphi_i(u)X(u)du
\end{equation*}
we can generate graph convolutions $\bby_n=\bbH_n(\bbS_n)\bbx_n$ by defining $u_i = (i-1)/n$ for $1 \leq i \leq n$ and setting
\begin{align}\begin{split} \label{eqn:det_graph_conv}
&[\bbS_n]_{ij} = \bbW(u_i,u_j)\\
&[\bbx_n]_i = X(u_i)\\
&\bbH_n(\bbS_n)\bbx_n = \bbV_n^\Hr h(\bbLam_n) \bbV_n^\Hr \bbx_n 
\end{split}\end{align}
where $\bbS_n$ is the GSO of $\bbG_n$, the deterministic graph obtained from $\bbW$ as in Section \ref{sbs:graphs_graphons}, $\bbx_n$ is the deterministic graph signal obtained by evaluating the graphon signal $X$ at points $u_i$, and $\bbLam_n$ and $\bbV_n$ are the eigenvalues and eigenvectors of $\bbS_n$ respectively. It is also possible to define graphon convolutions induced by graph convolutions. The graph convolution $\bby_n = \bbH_n(\bbS_n)\bbx_n$ induces a graphon convolution $Y_n = T_{\bbH_n} X_n$ obtained by constructing a partition $I_1 \cup \ldots \cup I_n$ of $[0,1]$ with $I_i = [(i-1)/n,i/n]$ and defining
\begin{align}\begin{split} \label{eqn:graphon_conv_ind}
&\bbW_{n}(u,v) = {[\bbS_n]_{ij}} \times \mbI(u \in I_i)\mbI(v \in I_j) \\
&X_n(u) = [\bbx_n]_i \times \mbI(u \in I_i) \\
&(T_{\bbH_n} X_n)(v) = \sum_{i \in \mbZ\setminus \{0\}} h(\lambda^n_i) \varphi^n_i(v) \int_0^1 \varphi^n_i(u)X_n(u)du
\end{split}\end{align}
where $\bbW_n$ is the \textit{graphon induced by} $\bbG_n$,  $X_n$ is the graphon signal induced by the graph signal $\bbx_n$ and $\lambda_i^n$ and $\varphi_i^n$ are the eigenvalues and eigenfunctions of $\bbW_n$.

Theorem \ref{thm:graphon-graph} is a direct consequence of the following theorem, which states that graphon convolutions can be approximated by graph convolutions on large graphs. 

\begin{theorem} \label{thm:graph-graphon-conv}
Consider the graphon convolution given by $Y=T_\bbH X$ as in \eqref{eqn:spec-graphon_filter}, where $h(\lambda)$ is constant for $|\lambda| < c$. For the graph convolution instantiated from $T_\bbH$ as $\bby_n = \bbH_n(\bbS_n)\bbx_n$ [cf.  \eqref{eqn:det_graph_conv}], under Assumptions \ref{as1} through \ref{as3} it holds
\begin{equation} \label{eqn:thm4result1}
\|Y-Y_n\|_{L_2} \leq {\sqrt{A_1}}\left(A_2 + \frac{\pi n_c}{\delta_c}\right)n^{-\frac{1}{2}}\|X\|_{L_2} + \frac{2A_3}{\sqrt{3}}n^{-\frac{1}{2}}
\end{equation}
where $Y_n = T_{\bbH_n} X_n$ is the graph convolution induced by $\bby_n = \bbH_n(\bbS_n)\bbx_n$ [cf. \eqref{eqn:graphon_conv_ind}], $n_c$ is the cardinality of the set $\ccalC =\{i\ |\ |\lambda^{n}_i| \geq c\}$, and $\delta_c = \min_{i \in \ccalC} (|\lambda_i - \lambda^{n}_{i+\mbox{\scriptsize sgn}(i)}|,|\lambda_{i+\mbox{\scriptsize sgn}(i)}-\lambda^{n}_i|, |\lambda_1-\lambda^n_{-1}|,|\lambda^n_1-\lambda_{-1}|)$, with $\lambda_i$ and $\lambda^n_i$ denoting the eigenvalues of $\bbW$ and $\bbW_n$ respectively. In particular, if $X=X_n$ we have
\begin{equation} \label{eqn:thm4result2}
\|Y-Y_n\|_{L_2} \leq {\sqrt{A_1}}\left(A_2 + \frac{\pi n_c}{\delta_c}\right)n^{-\frac{1}{2}}\|X\|_{L_2} .
\end{equation}
\end{theorem}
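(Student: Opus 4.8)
The plan is to compare the two graphon convolutions by inserting the intermediate operator $T_{\bbH_n}$ acting on the \emph{exact} signal $X$, and to carry out the comparison in the spectral domain, where the bandlimiting hypothesis on $h$ does its work. First I would split
\begin{equation*}
Y - Y_n = (T_\bbH - T_{\bbH_n})X + T_{\bbH_n}(X - X_n),
\end{equation*}
so that the second summand is the \emph{signal} error and the first is the \emph{operator} (transferability) error. The signal term is routine: since $\bbW_n$ is non-amplifying ($|h(\lambda)|<1$ in Assumption \ref{as2} gives $\|T_{\bbH_n}\|\le 1$) and $X_n$ is the piecewise-constant sampling of the $A_3$-Lipschitz signal $X$ on the intervals $I_i=[(i-1)/n,i/n]$ [cf. \eqref{eqn:graphon_conv_ind}], a direct integration yields $\|X-X_n\|_{L_2}\le A_3/(\sqrt{3}\,n)$, comfortably within the stated $\tfrac{2A_3}{\sqrt3}n^{-\frac12}$ term.

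The heart of the argument is the operator term. I would first invoke the deterministic sampling estimate from the graphon signal processing framework: because $\bbW$ is $A_1$-Lipschitz (Assumption \ref{as1}) and $\bbW_n$ is exactly its step-function approximation on the grid $\{u_i\}$, one has $\|T_\bbW-T_{\bbW_n}\|\le \sqrt{A_1}\,n^{-\frac12}=:\varepsilon_n$. By Weyl's inequality this controls the eigenvalue perturbation, $|\lambda_i-\lambda_i^n|\le\varepsilon_n$, and by the Davis--Kahan $\sin\Theta$ theorem it controls the eigenfunction perturbation, $\|\varphi_i-\varphi_i^n\|\le \pi\varepsilon_n/\delta_c$, for every eigenvalue separated from the rest of the spectrum by at least $\delta_c$.

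The key step is to exploit the hypothesis that $h(\lambda)$ is constant, say $h(\lambda)=h_0$, on $|\lambda|<c$. Writing the spectral forms \eqref{eqn:spec-graphon_filter} and \eqref{eqn:graphon_conv_ind} and using completeness of each eigenbasis, the low-pass parts collapse to $h_0 X$ on both sides and cancel, leaving only the finitely many terms with modulus at least $c$:
\begin{equation*}
(T_\bbH-T_{\bbH_n})X = \sum_{|\lambda_i|\ge c}(h(\lambda_i)-h_0)\langle X,\varphi_i\rangle\varphi_i \;-\; \sum_{|\lambda_i^n|\ge c}(h(\lambda_i^n)-h_0)\langle X,\varphi_i^n\rangle\varphi_i^n .
\end{equation*}
I would then pair the terms by index over $\ccalC=\{i:|\lambda_i^n|\ge c\}$ (of cardinality $n_c$) and add and subtract to separate the two sources of error: the factor differences $|h(\lambda_i)-h(\lambda_i^n)|\le A_2|\lambda_i-\lambda_i^n|\le A_2\varepsilon_n$ (Lipschitz $h$ and Weyl), and the projection differences $\|\langle X,\varphi_i\rangle\varphi_i-\langle X,\varphi_i^n\rangle\varphi_i^n\|$, each controlled via $\|\varphi_i-\varphi_i^n\|\le\pi\varepsilon_n/\delta_c$ together with $|h(\lambda_i^n)-h_0|<1$. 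Summing over the $n_c$ retained indices and applying Cauchy--Schwarz with Parseval's bound $\sum_i|\langle X,\varphi_i\rangle|^2\le\|X\|_{L_2}^2$ produces the contributions $A_2\varepsilon_n\|X\|_{L_2}$ and $(\pi n_c/\delta_c)\varepsilon_n\|X\|_{L_2}$, i.e. the factor $\sqrt{A_1}(A_2+\pi n_c/\delta_c)n^{-\frac12}\|X\|_{L_2}$. Combining with the signal term gives \eqref{eqn:thm4result1}, and setting $X=X_n$ removes the signal term to yield \eqref{eqn:thm4result2}.

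The main obstacle is precisely the control of the eigenfunction perturbation. A naive term-by-term comparison of the two spectral sums would fail, because the graphon eigenvalues accumulate at $0$, their eigengaps vanish there, and the $\sin\Theta$ estimate becomes vacuous for the corresponding eigenfunctions. The bandlimiting hypothesis is exactly the device that circumvents this: since $h$ is constant below $c$, the entire infinite low-pass part acts as $h_0$ times the identity on both operators and cancels, so the delicate Davis--Kahan bounds are ever only applied to the $n_c$ eigenvalues in $[c,1]\cup[-1,-c]$, whose separation is bounded below by $\delta_c$. Making this reduction rigorous --- in particular matching the retained index sets of $\bbW$ and $\bbW_n$ and verifying that $\delta_c$ genuinely lower-bounds the relevant gaps --- is where the care is needed.
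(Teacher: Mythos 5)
Your overall architecture is the same as the paper's: the split $Y-Y_n=(T_\bbH-T_{\bbH_n})X+T_{\bbH_n}(X-X_n)$, the further split of the operator term into a filter-value difference (Lipschitz $h$ combined with the Weyl-type eigenvalue bound of Proposition~\ref{prop:eigenvalue_diff}) and an eigenfunction difference (Davis--Kahan, Proposition~\ref{thm:davis_kahan}), the use of $\|\bbW-\bbW_n\|_{L_2}\le\sqrt{A_1}\,n^{-1/2}$ from Proposition~\ref{prop:w-w'}, and the observation that only the $n_c$ eigenvalues with $|\lambda_i^n|\ge c$ contribute to the Davis--Kahan sum, whose gaps are lower-bounded by $\delta_c$. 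All of that matches the paper.

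The genuine gap is your treatment of the constant low-pass part of $h$. You claim that, by ``completeness of each eigenbasis,'' the terms with $|\lambda|<c$ collapse to $h_0X$ on both sides and cancel. This fails for $T_{\bbW_n}$: it is a finite-rank operator (rank at most $n$), so its eigenfunctions $\varphi_i^n$ associated with nonzero eigenvalues span only a subspace of the piecewise-constant functions on the partition $I_1\cup\ldots\cup I_n$, not all of $L_2([0,1])$. Hence $\sum_i\langle X,\varphi_i^n\rangle\varphi_i^n=P_nX\ne X$, the low-pass part of $T_{\bbH_n}X$ equals $h_0$ times $P_nX$ minus the retained high-pass projections, and after your cancellation a residual $h_0(P-P_n)X$ survives (with $P$ the analogous projection for $T_\bbW$) that your accounting never bounds. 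This is precisely why the theorem carries $\frac{2A_3}{\sqrt3}n^{-1/2}$ rather than the single $A_3$ contribution you produce: the paper instead writes $h=f+g$ with $f=0$ for $|\lambda|<c$ and $g$ the constant band-pass remainder, proves the estimate for $f$ by essentially your argument, and charges the $g$-part a second error of order $\|X-X_n\|_{L_2}\le A_3/\sqrt{3n}$. To repair your version you would have to bound $\|(P-P_n)X\|_{L_2}$ directly, e.g.\ by comparing both projections to the piecewise-constant averaging of $X$ (which again costs an $\ccalO(A_3 n^{-1/2})$ term and restores the factor of two), and you would also need to address the case where $\bbS_n$ is rank-deficient, so that the range of $P_n$ is strictly smaller than the piecewise constants. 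The index-matching issue you flag at the end (the sets $\{i:|\lambda_i|\ge c\}$ and $\{i:|\lambda_i^n|\ge c\}$ need not coincide) is real but secondary; the paper's device of first proving the statement for filters vanishing below $c$ lets the two spectral sums be compared term by term over all of $\mbZ\setminus\{0\}$ and confines the count $n_c$ to the Davis--Kahan term alone.
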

\begin{proof}[Proof of Theorem \ref{thm:graph-graphon-conv}]
To prove Theorem \ref{thm:graph-graphon-conv}, we need the following four propositions.
\begin{proposition} \label{prop:w-w'}
Let $\bbW: [0,1]^2 \to [0,1]$ be an $A_1$-Lipschitz graphon, and let $\bbW_n$ be the graphon induced by the deterministic graph $\bbG_n$ obtained from $\bbW$ as in \eqref{eqn:deterministic_graph}. The $L_2$ norm of $\bbW-\bbW_n$ satisfies
\begin{equation*}
    \|\bbW-\bbW_n\|_{L_2([0,1]^2)} \leq \sqrt{\|\bbW-\bbW_n\|_{L_1([0,1]^2)}} \leq \dfrac{\sqrt{A_1}}{\sqrt{n}} \text{.}
\end{equation*}
\end{proposition}
\begin{proof}
Partitioning the unit interval as $I_i = [(i-1)/n, i/n]$ for $1 \leq i \leq  n$ (the same partition used to obtain $\bbS_n$, and thus $\bbW_n$, from $\bbW$), we can use the graphon's Lipschitz property to derive
\begin{align*}
\left\|\bbW-\bbW_n\right\|_{L_1(I_i\times I_j)} \leq A_1\int_0^{1/n} \int_0^{1/n} |u| du dv &+ A_1\int_0^{1/n} \int_0^{1/n} |v| dv du =  \dfrac{A_1}{2n^3} + \dfrac{A_1}{2n^3} = \dfrac{A_1}{n^3} \text{.}
\end{align*}
We can then write
\begin{equation*}
\|\bbW-\bbW_n\|_{{L_1([0,1]^2)}} = \sum_{i,j}\left\|\bbW-\bbW_n\right\|_{L_1(I_i\times I_j)} \leq n^2 \dfrac{A_1}{n^3} = \dfrac{A_1}{n}
\end{equation*}
which, since $\bbW-\bbW_n: [0,1]^2 \to [-1,1]$, implies
\begin{equation*}
\|\bbW-\bbW_n\|_{{L_2([0,1]^2)}} \leq \sqrt{\|\bbW-\bbW_n\|_{{L_1([0,1]^2)}}} \leq \dfrac{\sqrt{A_1}}{\sqrt{n}} \text{.}
\end{equation*}
\end{proof}

\begin{proposition}\label{thm:davis_kahan}
Let $T$ and $T^\prime$ be two self-adjoint operators on a separable Hilbert space $\ccalH$ whose spectra are partitioned as $\gamma \cup \Gamma$ and $\omega \cup \Omega$ respectively, with $\gamma \cap \Gamma = \emptyset$ and $\omega \cap \Omega = \emptyset$. If there exists $d > 0$ such that $\min_{x \in \gamma,\, y \in \Omega} |{x - y}| \geq d$ and $\min_{x \in \omega,\, y \in \Gamma}|{x - y}| \geq d$, then
\begin{equation*}\label{eqn:davis_kahan}
	\|E_T(\gamma) - E_{T^\prime}(\omega)\| \leq \frac{\pi}{2} \frac{\|{T - T^\prime}\|}{d}
\end{equation*}
\end{proposition}
\begin{proof}
See \citep{seelmann2014notes}.
\end{proof}

\begin{proposition} \label{prop:x-x'}
Let $X \in L_2([0,1])$ be an $A_3$-Lipschitz graphon signal, and let $X_n$ be the graphon signal induced by the deterministic graph signal $\bbx_n$ obtained from $X$ as in \eqref{eqn:det_gcn} and \eqref{eqn:det_graph_conv}. The $L_2$ norm of $X-X_n$ satisfies
\begin{equation*}
    \|X-X_n\|_{L_2([0,1])} \leq \dfrac{{A_3}}{\sqrt{3n}} \text{.}
\end{equation*}
\end{proposition}
\begin{proof}
Partitioning the unit interval as $I_i = [(i-1)/n, i/n]$ for $1 \leq i \leq  n$ (the same partition used to obtain $\bbx_n$, and thus $X_n$, from $X$), we can use the Lipschitz property of $X$ to derive
\begin{align*}
\left\|X-X_n\right\|_{L_2(I_i)} \leq \sqrt{A_3^2\int_0^{1/n} u^2 du} =  \sqrt{\dfrac{A_3^2}{3n^3}} + \dfrac{A_3}{n\sqrt{3n}} \text{.}
\end{align*}
We can then write
\begin{equation*}
\|X-X_n\|_{{L_2([0,1])}} = \sum_{i}\left\|X-X_n\right\|_{L_2(I_i)} \leq n \dfrac{A_3}{n\sqrt{3n}} = \dfrac{A_3}{\sqrt{3n}}.
\end{equation*}
\end{proof}

\begin{proposition} \label{prop:eigenvalue_diff}
Let $\bbW:[0,1]^2\to[0,1]$ and $\bbW':[0,1]^2\to[0,1]$ be two graphons with eigenvalues given by $\{\lambda_i(T_\bbW)\}_{i\in\mbZ\setminus\{0\}}$ and $\{\lambda_i(T_{\bbW'})\}_{i\in\mbZ\setminus\{0\}}$, ordered according to their sign and in decreasing order of absolute value. Then, for all $i \in \mbZ \setminus \{0\}$, the following inequalities hold
\begin{equation*}
|\lambda_i(T_{\bbW'})-\lambda_i(T_\bbW)| \leq \|T_{\bbW'-\bbW}\| \leq \|\bbW'-\bbW\|_{L_2}\ .
\end{equation*}
\end{proposition}
\begin{proof}
Let $\bbA:=\bbW'-\bbW$ and let $S_k$ denote a $k$-dimensional subspace of $L_2([0,1])$. Using the minimax principle \citep[Chapter 1.6.10]{kato2013perturbation}, we can write
\begin{align*}
\lambda_k(T_{\bbW}) = \min_{S_{k-1}} \max_{X \in S^\perp_{k-1}, \|X\|_{L_2}=1} \langle T_\bbW X, X\rangle\ .
\end{align*}
Therefore, it holds that
\begin{align*}
\lambda_k(T_{\bbW'}) &= \lambda_k(T_{\bbW+\bbA}) = \min_{S_{k-1}} \max_{X \in S^\perp_{k-1}, \|X\|_{L_2}=1} \langle T_{\bbW+\bbA} X, X\rangle = \min_{S_{k-1}} \max_{X \in S^\perp_{k-1}, \|X\|_{L_2}=1} \langle T_{\bbW}+T_{\bbA} X, X\rangle \\
&=\min_{S_{k-1}} \max_{X \in S^\perp_{k-1}, \|X\|_{L_2}=1} \left(\langle T_{\bbW} X, X\rangle + \langle T_{\bbA} X, X\rangle\right) \\
&\leq \min_{S_{k-1}} \left( \max_{X \in S^\perp_{k-1}, \|X\|_{L_2}=1} \langle T_{\bbW} X, X\rangle + \max_{X \in S^\perp_{k-1}, \|X\|_{L_2}=1} \langle T_{\bbA} X, X\rangle \right) \\
&\leq 
\min_{S_{k-1}} \left( \max_{X \in S^\perp_{k-1}, \|X\|_{L_2}=1} \langle T_{\bbW} X, X\rangle + \max_\ell\lambda_\ell(T_\bbA)\right) \\
&= \min_{S_{k-1}} \max_{X \in S^\perp_{k-1}, \|X\|_{L_2}=1} \langle T_{\bbW} X, X\rangle + \max_\ell\lambda_\ell(T_\bbA)
=\lambda_k(T_\bbW) + \max_\ell\lambda_\ell(T_\bbA)\ .
\end{align*}
where the first inequality follows from $\max(a+b) \leq \max(a) + \max(b)$ and the second from the fact that $\langle T_{\bbA} X, X\rangle \leq \max_\ell\lambda_\ell(T_\bbA)$ for all unitary $X$. Rearranging terms and using the definition of the operator norm, we get
\begin{align} \label{eqn:pf_eig_diff}
\lambda_k(T_{\bbW'})-\lambda_k(T_{\bbW}) \leq \max_\ell\lambda_\ell(T_\bbA) \leq \max_\ell|\lambda_\ell(T_\bbA)| = \|T_{\bbA}\| \leq \|A\|_{L_2}.
\end{align}
where we have also used the fact that the Hilbert-Schmidt norm dominates the operator norm. 

To prove that this inequality holds in absolute value, let $\bbA'=-\bbA$. Following the same reasoning as before, we get
\begin{equation*}
\lambda_k(T_\bbW) = \lambda_k(T_{\bbW'+A'}) \leq \lambda_k(T_{\bbW'}) + \|T_{\bbA'}\| \leq \lambda_k(T_\bbW) + \|A'\|_{L_2}
\end{equation*}
and since $\|T_{\bbA'}\|=\|T_{\bbA}\|$ and $\|A'\|_{L_2} = \|A\|_{L_2}$, 
\begin{equation} \label{eqn:pf_eig_diff2}
\lambda_k(T_\bbW)- \lambda_k(T_{\bbW'}) \leq \|T_{\bbA}\| \leq \|A\|_{L_2}\ .
\end{equation}
Putting \eqref{eqn:pf_eig_diff} and \eqref{eqn:pf_eig_diff2} together completes the proof.
\end{proof}

We first prove the result of Theorem \ref{thm:graph-graphon-conv} for filters $h(\lambda)$ satisfying $h(\lambda)=0$ for $|\lambda|<c$. Using the triangle inequality, we can write the norm difference $\|Y-Y_n\|_{L_2}$ as
\begin{align*}
\begin{split}
\left\|Y-Y_n\right\|_{L_2} &= \left\|T_\bbH X-T_{\bbH_n}X_n\right\|_{L_2} = \left\|T_\bbH X +T_{\bbH_n} X - T_{\bbH_n}X -T_{\bbH_n}X_n\right\|_{L_2} \\
&\leq \left\|T_\bbH X-T_{\bbH_n} X\right\|_{L_2} \mbox{ \textbf{(1)} } + \left\|T_{\bbH_n}\left(X-X_n\right)\right\|_{L_2} \mbox{ \textbf{(2)} }
\end{split}
\end{align*}
where the LHS is split between terms \textbf{(1)} and \textbf{(2)}. 

Writing the inner products $\int_0^1 X(u) \varphi_i(u) du$ and $\int_0^1 X(u) \varphi^n_i(u) du$ as $\hat{X}(\lambda_i)$ and $\hat{X}(\lambda^n_i)$ for simplicity, we can then express \textbf{(1)} as
\begin{align*}
\begin{split}
\left\|T_\bbH X-T_{\bbH_n} X\right\|_{L_2} &= \left\|\sum_{i} h(\lambda_i)\hat{X}(\lambda_i) \varphi_i - \sum_i h(\lambda_i^n)\hat{X}(\lambda^n_i)\varphi_i^n\right\|_{L_2} \\
& = \left\|\sum_{i} h(\lambda_i)\hat{X}(\lambda_i) \varphi_i -  h(\lambda_i^n)\hat{X}(\lambda^n_i)\varphi_i^n\right\|_{L_2} . 
\end{split}
\end{align*}
Using the triangle inequality, this becomes
\begin{align*}
\begin{split}
\left\|T_\bbH X-T_{\bbH_n} X\right\|_{L_2} &= \left\|\sum_{i} h(\lambda_i)\hat{X}(\lambda_i) \varphi_i -  h(\lambda_i^n)\hat{X}(\lambda^n_i)\varphi_i^n\right\|_{L_2} \\
&= \left\|\sum_{i} h(\lambda_i)\hat{X}(\lambda_i) \varphi_i + h(\lambda_i^n)\hat{X}(\lambda_i)\varphi_i - h(\lambda_i^n)\hat{X}(\lambda_i)\varphi_i - h(\lambda_i^n)\hat{X}(\lambda_i^n)\varphi_i^n \right\|_{L_2} \\
&\leq \left\|\sum_{i} \left(h(\lambda_i)-h(\lambda_i^n)\right)\hat{X}(\lambda_i) \varphi_i \right\|_{L_2} \mbox{  \textbf{(1.1)}} \\
&+ \left\|\sum_{i} h(\lambda_i^n) \left(\hat{X}(\lambda_i) \varphi_i - \hat{X}(\lambda^n_i)\varphi_i^n \right) \right\|_{L_2} \mbox{  \textbf{(1.2)}}
\end{split}
\end{align*}
where we have now split \textbf{(1)} between \textbf{(1.1)} and \textbf{(1.2)}. 

Focusing on \textbf{(1.1)}, note that the filter's Lipschitz property allows writing $h(\lambda_i)-h(\lambda_i^n) \leq A_2 |\lambda_i-\lambda_i^n|$. 
Hence, using Proposition \ref{prop:eigenvalue_diff} together with the Cauchy-Schwarz inequality, we get
\begin{align*}
\begin{split}
\left\|\sum_{i} \left(h(\lambda_i)-h(\lambda_i^n)\right)\hat{X}(\lambda_i) \varphi_i \right\|_{L_2} \leq A_2 \|\bbW-\bbW_n\| _{L_2}\ \left\|\sum_{i} \hat{X}(\lambda_i) \varphi_i\right\|_{L_2}
\end{split}
\end{align*}
and, from Proposition \ref{prop:w-w'},
\begin{align} \label{eqn:thm4.1}
\begin{split}
\left\|\sum_{i} \left(h(\lambda_i)-h(\lambda_i^n)\right)\hat{X}(\lambda_i) \varphi_i \right\|_{L_2} \leq \dfrac{A_2 \sqrt{A_1}}{\sqrt{n}} \|X\|_{L_2}.
\end{split}
\end{align}

For \textbf{(1.2)}, we use the triangle and Cauchy-Schwarz inequalities to write
\begin{align*}
\begin{split}
\left\|\sum_{i} h(\lambda_i^n) \left(\hat{X}(\lambda_i) \varphi_i - \hat{X}(\lambda^n_i)\varphi_i^n \right) \right\|_{L_2} &= \left\|\sum_{i} h(\lambda_i^n) \left(\hat{X}(\lambda_i) \varphi_i + \hat{X}(\lambda_i) \varphi_i^n - \hat{X}(\lambda_i) \varphi_i^n - \hat{X}(\lambda_i^n)\varphi_i^n \right) \right\|_{L_2} \\
&\leq \left\|\sum_{i} h(\lambda_i^n) \hat{X}(\lambda_i) (\varphi_i - \varphi_i^n) \right\|_{L_2} + \left\|\sum_{i} h(\lambda_i^n) \varphi_i^n \langle X, \varphi_i - \varphi_i^n \rangle \right\|_{L_2} \\
&\leq 2\sum_{i} \|h(\lambda_i^n)\|_{L_2} \|X\|_{L_2} \|\varphi_i - \varphi_i^n\|_{L_2}.
\end{split}
\end{align*}
Using Proposition \ref{thm:davis_kahan} with $\gamma = \lambda_i$ and $\omega = \lambda_i^n$, we then get 
\begin{align*}
\begin{split}
\left\|\sum_{i} h(\lambda_i^n) \left(\hat{X}(\lambda_i) \varphi_i - \hat{X}(\lambda^n_i)\varphi_i^n \right) \right\|_{L_2} &\leq \|X\|_{L_2} \sum_{i} \|h(\lambda_i^n)\|_{L_2} \frac{\pi\|T_\bbW - T_{\bbW_n}\|}{d_i}
\end{split}
\end{align*}
where $d_i$ is the minimum between $\min(|\lambda_i - \lambda_{i+1}^n|,|\lambda_i-\lambda_{i-1}^n|)$ and $\min(|\lambda^n_i - \lambda_{i+1}|,|\lambda^n_i-\lambda_{i-1}|)$ for each $i$. Since $\delta_c \leq d_i$ for all $i$ and $\|T_\bbW - T_{\bbW_n}\| \leq \|\bbW-\bbW_n\|_{L_2}$ (i.e., the Hilbert-Schmidt norm dominates the operator norm), this becomes
\begin{align*}
\begin{split}
\left\|\sum_{i} h(\lambda_i^n) \left(\hat{X}(\lambda_i) \varphi_i - \hat{X}(\lambda^n_i)\varphi_i^n \right) \right\|_{L_2} &\leq \frac{\pi\|\bbW - {\bbW_n}\|_{L_2}}{\delta_c} \|X\|_{L_2} \sum_{i} \|h(\lambda_i^n)\|_{L_2} 
\end{split}
\end{align*}
and, using Proposition \ref{prop:w-w'},
\begin{align*}
\begin{split}
\left\|\sum_{i} h(\lambda_i^n) \left(\hat{X}(\lambda_i) \varphi_i - \hat{X}(\lambda^n_i)\varphi_i^n \right) \right\|_{L_2} &\leq \frac{\pi\sqrt{A_1}}{\delta_c\sqrt{n}} \|X\|_{L_2} \sum_{i} \|h(\lambda_i^n)\|_{L_2}.
\end{split}
\end{align*}
The final bound for \textbf{(1.2)} is obtained by noting that $|h(\lambda)|<1$ and $h(\lambda) = 0$ for $|\lambda| < c$. Since there are a total of $n_c$ eigenvalues $\lambda_i^n$ for which $|\lambda_i^n| \geq c$, we get
\begin{align} \label{eqn:thm4.2}
\begin{split}
\left\|\sum_{i} h(\lambda_i^n) \left(\hat{X}(\lambda_i) \varphi_i - \hat{X}(\lambda^n_i)\varphi_i^n \right) \right\|_{L_2} &\leq \frac{\pi\sqrt{A_1}}{\delta_c\sqrt{n}} \|X\|_{L_2} n_c.
\end{split}
\end{align}

A bound for \textbf{(2)} follows immediately from Proposition \ref{prop:x-x'}. Since $|h(\lambda)|<1$, the norm of the operator $T_{\bbH_n}$ is bounded by 1. Using the Cauchy-Schwarz inequality, we then have $\|T_{\bbH_n}(X-X_n)\|_{L_2} \leq \|X-X_n\|_{L_2}$ and therefore
\begin{equation} \label{eqn:thm4.3}
\|T_{\bbH_n}(X-X_n)\|_{L_2} \leq \dfrac{A_3}{\sqrt{3n}}
\end{equation}
which completes the bound on $\|Y-Y_n\|_{L_2}$ when $h(\lambda)=0$ for $|\lambda|<c$. For filters in which $h(\lambda)$ is a constant for $\lambda < c$, we obtain a bound by observing that $h(\lambda)$ can be constructed as the sum of two filters: an $A_2$-Lipschitz filter $f(\lambda)$ with $f(\lambda) = 0$ for $|\lambda|<c$, and a bandpass filter $g(\lambda)$ with $g(\lambda)$ constant for $|\lambda| < c$ and 0 otherwise. Hence, by the triangle inequality
\begin{align*}
\begin{split}
\|Y-Y_n\|_{L_2} = \|T_{\bbH}X-T_{\bbH_n}\|_{L_2} \leq \|T_{\bbF}X-T_{\bbF_n}X_n\|_{L_2} + \|T_{\bbG}X-T_{\bbG_n}X_n\|_{L_2}.
\end{split}
\end{align*}
The bound on $\|T_{\bbF}X-T_{\bbF_n}\|_{L_2}$ is the one we have derived, and for $\|T_{\bbG}X-T_{\bbG_n}X_n\|_{L_2}$, we use $|g(\lambda)|\leq 1$ and the fact that $g(\lambda)$ is constant in $[0,c]$ with $0 < c \leq 1$ to obtain
\begin{equation} \label{eqn:thm4.4}
\|T_{\bbG}X-T_{\bbG_n}X_n\|_{L_2} \leq \|X-X_n\|_{L_2} \leq \dfrac{A_3}{\sqrt{3n}}
\end{equation}
where the last inequality follows from Proposition \ref{prop:x-x'}.

Putting together \eqref{eqn:thm4.1}, \eqref{eqn:thm4.2}, \eqref{eqn:thm4.3} and \eqref{eqn:thm4.4}, we arrive at the first result of the theorem as stated in \eqref{eqn:thm4result1}. The second result [cf. \eqref{eqn:thm4result2}] is obtained by observing that, for $X=X_n$, bound \textbf{(2)} in \eqref{eqn:thm4.3} simplifies to $\|T_{\bbH_n}(X-X_n)\|_{L_2} = 0$; and, similarly in \eqref{eqn:thm4.4}, $\|T_{\bbG}X-T_{\bbG_n}X_n\|_{L_2}\leq \|X-X_n\|_{L_2}=0$.
\end{proof}

\begin{proof}[Proof of Theorem \ref{thm:graphon-graph}] To compute a bound for $\|Y-Y_n\|_{L_2}$, we start by writing it in terms of the last layer's features as
\begin{equation} \label{eqn:proof2.0}
\left\|Y-Y_n\right\|^2_{L_2} = \sum_{f=1}^{F_L} \left\|X^f_{L} - X^f_{n,L}\right\|^2_{L_2}.
\end{equation}
At  layer $\ell$ of the WNN $\bbPhi(\ccalH;\bbW;X)$, we have
\begin{align*}
\begin{split}
X^f_{\ell} = \rho\left(\sum_{g=1}^{F_{\ell-1}}\bbh_\ell^{fg} *_{\bbW} X_{\ell-1}^g\right) = \rho\left(\sum_{g=1}^{F_{\ell-1}}T_{\bbH_\ell^{fg}} X_{\ell-1}^g\right)
\end{split}
\end{align*}
and similarly for $\bbPhi(\ccalH;\bbW_n;X_n)$,
\begin{align*}
\begin{split}
X^f_{n,\ell} = \rho\left(\sum_{g=1}^{F_{\ell-1}}\bbh_{n,\ell}^{fg} *_{\bbW} X_{n,\ell-1}^g\right) = \rho\left(\sum_{g=1}^{F_{\ell-1}}T_{\bbH_{n,\ell}^{fg}} X_{n,\ell-1}^g\right).
\end{split}
\end{align*}
We can therefore write $\|X^f_{\ell}-X^f_{n,\ell}\|_{L_2}$ as
\begin{align*}
\begin{split}
\left\|X^f_{\ell}-X^f_{n,\ell}\right\|_{L_2} = \left\|\rho\left(\sum_{g=1}^{F_{\ell-1}}T_{\bbH_\ell^{fg}} X_{\ell-1}^g\right)-\rho\left(\sum_{g=1}^{F_{\ell-1}}T_{\bbH_{n,\ell}^{fg}} X_{n,\ell-1}^g\right)\right\|_{L_2}
\end{split}
\end{align*}
and, since $\rho$ is normalized Lipschitz,
\begin{align*}
\begin{split}
\left\|X^f_{\ell}-X^f_{n,\ell}\right\|_{L_2} &\leq \left\|\sum_{g=1}^{F_{\ell-1}}T_{\bbH_\ell^{fg}} X_{\ell-1}^g-T_{\bbH_{n,\ell}^{fg}} X_{n,\ell-1}^g\right\|_{L_2}\\
&\leq \sum_{g=1}^{F_{\ell-1}}\left\|T_{\bbH_\ell^{fg}} X_{\ell-1}^g-T_{\bbH_{n,\ell}^{fg}} X_{n,\ell-1}^g\right\|_{L_2}.
\end{split}
\end{align*}
where the second inequality follows from the triangle inequality. Looking at each feature $g$ independently, we apply the triangle inequality once again to get
\begin{align*}
\begin{split}
\left\|T_{\bbH_\ell^{fg}} X_{\ell-1}^g-T_{\bbH_{n,\ell}^{fg}} X_{n,\ell-1}^g\right\|_{L_2} \leq \left\|T_{\bbH_\ell^{fg}} X^g_{\ell-1}-T_{\bbH_{n,\ell}^{fg}} X^g_{\ell-1}\right\|_{L_2} + \left\|T_{\bbH^{fg}_{n,\ell}}\left(X^g_{\ell-1}-X^g_{n,\ell-1}\right)\right\|_{L_2}.
\end{split}
\end{align*}
The first term on the RHS of this inequality is bounded by \eqref{eqn:thm4result2} in Theorem \ref{thm:graph-graphon-conv}. The second term can be decomposed by using Cauchy-Schwarz and recalling that $|h(\lambda)| < 1$ for all graphon convolutions in the WNN (Assumption \ref{as1}). We thus obtain a recursion for $\|X^f_{\ell}-X^f_{n,\ell}\|_{L_2}$, which is given by
\begin{align} \label{eqn:proof2.1}
\begin{split}
\left\|X^f_{\ell}-X^f_{n,\ell}\right\|_{L_2} \leq \sum_{g=1}^{F_{\ell-1}}{\sqrt{A_1}}\left(A_2 + \frac{\pi n_c}{\delta_c}\right)n^{-\frac{1}{2}}\|X^g_{\ell-1}\|_{L_2} + \sum_{g=1}^{F_{\ell-1}}\left\|X^g_{\ell-1}-X^g_{n,\ell-1}\right\|_{L_2}
\end{split}
\end{align}
and whose first term, $\sum_{g=1}^{F_0} \|X_0^g-X_{n,0}^g\|_{L_2}= \sum_{g=1}^{F_0} \|X^g-X_{n}^g\|_{L_2}$, is bounded as $\sum_{g=1}^{F_0} \|X_0^g-X_{n,0}^g\|_{L_2} \leq F_0A_3/\sqrt{3n}$ by Proposition \ref{prop:x-x'}.

To solve this recursion, we need to compute the norm $\|X_{\ell-1}^g\|_{L_2}$. Since the nonlinearity $\rho$ is normalized Lipschitz and $\rho(0)=0$ by Assumption \ref{as2}, this bound can be written as
\begin{align*}
\begin{split}
\left\|X_{\ell-1}^g\right\|_{L_2} \leq \left\|\sum_{g=1}^{F_{\ell-1}} T_{\bbH_\ell^{fg}} X_{\ell-1}^g \right\|_{L_2}
\end{split}
\end{align*}
and using the triangle and Cauchy Schwarz inequalities,
\begin{align*}
\begin{split}
\left\|X_{\ell-1}^g\right\|_{L_2} \leq \sum_{g=1}^{F_{\ell-1}}\left\|T_{\bbH_\ell^{fg}} \right\|_{L_2} \left\|X_{\ell-1}^g \right\|_{L_2} \leq \sum_{g=1}^{F_{\ell-1}}\left\|X_{\ell-1}^g \right\|_{L_2}
\end{split}
\end{align*}
where the second inequality follows from $|h(\lambda)| < 1$. Expanding this expression with initial condition $X_0^g = X^g$ yields
\begin{align} \label{eqn:proof2.2}
\begin{split}
\left\|X_{\ell-1}^g\right\|_{L_2} \leq \prod_{\ell'=1}^{\ell-1} F_{\ell'} \sum_{g=1}^{F_{0}}\left\|X^g \right\|_{L_2}.
\end{split}
\end{align}
and substituting it back in \eqref{eqn:proof2.1} to solve the recursion, we get
\begin{align} \label{eqn:proof2.3}
\begin{split}
\left\|X^f_{\ell}-X^f_{n,\ell}\right\|_{L_2} \leq L {\sqrt{A_1}}\left(A_2 + \frac{\pi n_c}{\delta_c}\right)n^{-\frac{1}{2}}\left(\prod_{\ell'=1}^{\ell-1} F_{\ell'}\right) \sum_{g=1}^{F_{0}}\left\|X^g \right\|_{L_2} + \frac{F_0A_3}{\sqrt{3}}{n}^{-\frac{1}{2}}.
\end{split}
\end{align}

To arrive at the result of Theorem \ref{thm:graphon-graph}, we  evaluate \eqref{eqn:proof2.3} with $\ell=L$ and substitute it into \eqref{eqn:proof2.0} to obtain
\begin{align} \label{eqn:proof2.4}
\begin{split}
\left\|Y-Y_n\right\|^2_{L_2} &= \sum_{f=1}^{F_L} \left\|X^f_{L} - X^f_{n,L}\right\|^2_{L_2} \\
&\leq \sum_{f=1}^{F_L} \left(L{\sqrt{A_1}}\left(A_2 + \frac{\pi n_c}{\delta_c}\right)n^{-\frac{1}{2}}\left(\prod_{\ell=1}^{L-1} F_{\ell}\right) \sum_{g=1}^{F_{0}}\left\|X^g \right\|_{L_2} + \frac{F_0A_3}{\sqrt{3}}{n}^{-\frac{1}{2}}\right)^2.
\end{split}
\end{align}
Finally, since $F_0 = F_L = 1$ and $F_\ell = F$ for $1 \leq \ell \leq L-1$, 
\begin{align} \label{eqn:proof2.5}
\begin{split}
\left\|Y-Y_n\right\|_{L_2} \leq L{\sqrt{A_1}}\left(A_2 + \frac{\pi n_c}{\delta_c}\right)n^{-\frac{1}{2}}F^{L-1}\left\|X \right\|_{L_2} + \frac{A_3}{\sqrt{3}}{n}^{-\frac{1}{2}}.
\end{split}
\end{align}
\end{proof}

\section*{Proof of Theorem \ref{thm:graph-graph}}

Theorem \ref{thm:graph-graph} follows directly from Theorem \ref{thm:graphon-graph} via the triangle inequality.

\begin{proof}[Proof of Theorem \ref{thm:graph-graph}]
By the triangle inequality, we can bound $\|Y_{n_1}-Y_{n_2}\|_{L_2}$ as
\begin{equation*}
\left\|Y_{n_1}-Y_{n_2}\right\|_{L_2} = \left\|Y_{n_1}-Y+Y-Y_{n_2}\right\|_{L_2} \leq \left\|Y_{n_1}-Y\right\|_{L_2}+\left\|Y-Y_{n_2}\right\|_{L_2}.
\end{equation*}
Theorem \ref{thm:graphon-graph} gives a bound for both $\|Y_{n_1}-Y\|_{L_2}$ and $\|Y-Y_{n_2}\|_{L_2}$. Setting $n_c' = \max_{j \in\{1,2\}} |\ccalC_j|$, $\ccalC_j = \{i\ |\ |\lambda^{n_j}_i| \geq c\}$, and $\delta_c' = \min_{i \in \ccalC_j, j \in \{1,2\}}(|\lambda_i - \lambda^{n_j}_{i+\mbox{\scriptsize sgn}(i)}|,|\lambda_{i+\mbox{\scriptsize sgn}(i)}-\lambda^{n_j}_i|, |\lambda_1-\lambda^{n_j}_{-1}|,|\lambda^{n_j}_1-\lambda_{-1}|)$, we arrive at the theorem's result. 
\end{proof}

\section*{Additional Numerical Results: Consensus}

In this section, we provide a second set of experiments to illustrate the effect of the hyperparameters $F$, $K$, and $L$ (number of features, filter taps, and layers respectively) on GNN transferability. These experiments are based on the consensus problem, in which the goal is to drive the signal values at each node to the average of the graph signal over all nodes. 

The problem setup is as follows. Given a network $\bbG_n$, the input data $\bbx_n$ is generated by sampling a graph signal $\bbx_n = |\tbx_n|$ from a folded multivariate normal distribution with mean $\tbmu$ and covariance $\tbSigma$. Explicitly, $\tbx_n \sim \ccalN(\tbmu,\tbSigma)$, where we set $\tbmu = \boldsymbol{0}$ and $\tbSigma = 100\bbI$. The output data is given by
\begin{equation*}
\bby_n = \dfrac{\sum_{i=1}^n [\bbx_n]_i}{n} \boldsymbol{1}
\end{equation*}
which is also a graph signal on $\bbG_n$. This data is split between $8400$ input-output pairs for training, $200$ for validation, and $200$ for testing. 

To analyze transferability, we train GNNs $\bbPhi(\ccalH_n;\bbS_n;\bbx_n)$ on small networks of size $n$ for multiple values of $n$, and use the learned parameter sets $\ccalH_n$ to define and test GNNs $\bbPhi(\ccalH_n;\bbS_N; \bbx_N)$ on a network of size $N \gg n$. In the first experiment, the networks are all stochastic block model graphs with 2 (balanced) communities, intra-community probability $p_{c_ic_i} =0.8$ and inter-community probability $p_{c_i c_j}=0.2$. We set $N=2000$ and vary $n$ as $n=50,250,500,1000$.

The GNN parameters are learned by optimizing the L1 loss on the training set using ADAM with learning rate $0.001$ and decay factors $\beta_1 =0.9$ and $\beta_2 = 0.999$. We keep the model with smallest relative RMSE (rRMSE) on the validation set over 40 epochs. The performance metric for transferability is the relative difference between the test rRMSEs achieved by the GNN on $\bbG_n$ and the GNN on $\bbG_N$, i.e., the difference between the rRMSEs obtained by $\bbPhi(\ccalH_n;\bbS_n; \bbx_n)$ and $\bbPhi(\ccalH_n;\bbS_N; \bbx_N)$ on the test set relative to the test rRMSE for $\bbPhi(\ccalH_n;\bbS_n; \bbx_n)$. This relative rRMSE difference is reported in Figures \ref{features} through \ref{layers} for 3 graph realizations and 3 data realizations per graph, as well as different values of $F$, $K$, and $L$. In Figure \ref{features}, $K$ and $L$ are fixed at $K=4$ and $L=1$, and we vary $F$. In Figure \ref{taps}, $F$ and $L$ are fixed at $F=8$ and $L=1$, and we vary $K$. In Figure \ref{layers}, we fix $F=8$ and $K=4$ and vary the number of layers $L$.

Note that, for all configurations of $F$, $K$ and $L$, the average relative rRMSE difference and its standard deviation decrease as $n$ increases, agreeing with Theorem \ref{thm:graph-graph}. In Figure \ref{features}, we can also see that for smaller values of $n$ the error bars are smaller for $F=4$ than they are for $F=8$ and $F=16$. This illustrates the dependence of the transferability bound on the GNN width. In Figure \ref{taps}, varying the number of filter taps $K$ does not seem to have much of an effect on GNN transferability. In contrast, in Figure \ref{layers} we can clearly see the size of the error bars increase with the number of layers $L$, especially for small $n$. This is expected, since $L$ exponentiates $F$ in the transferability bound of Theorem \ref{thm:graph-graph}.

\begin{figure*} 
\centering
  \begin{subfigure}[b]{0.31\textwidth}
  \centering
  \includegraphics[width=\linewidth]{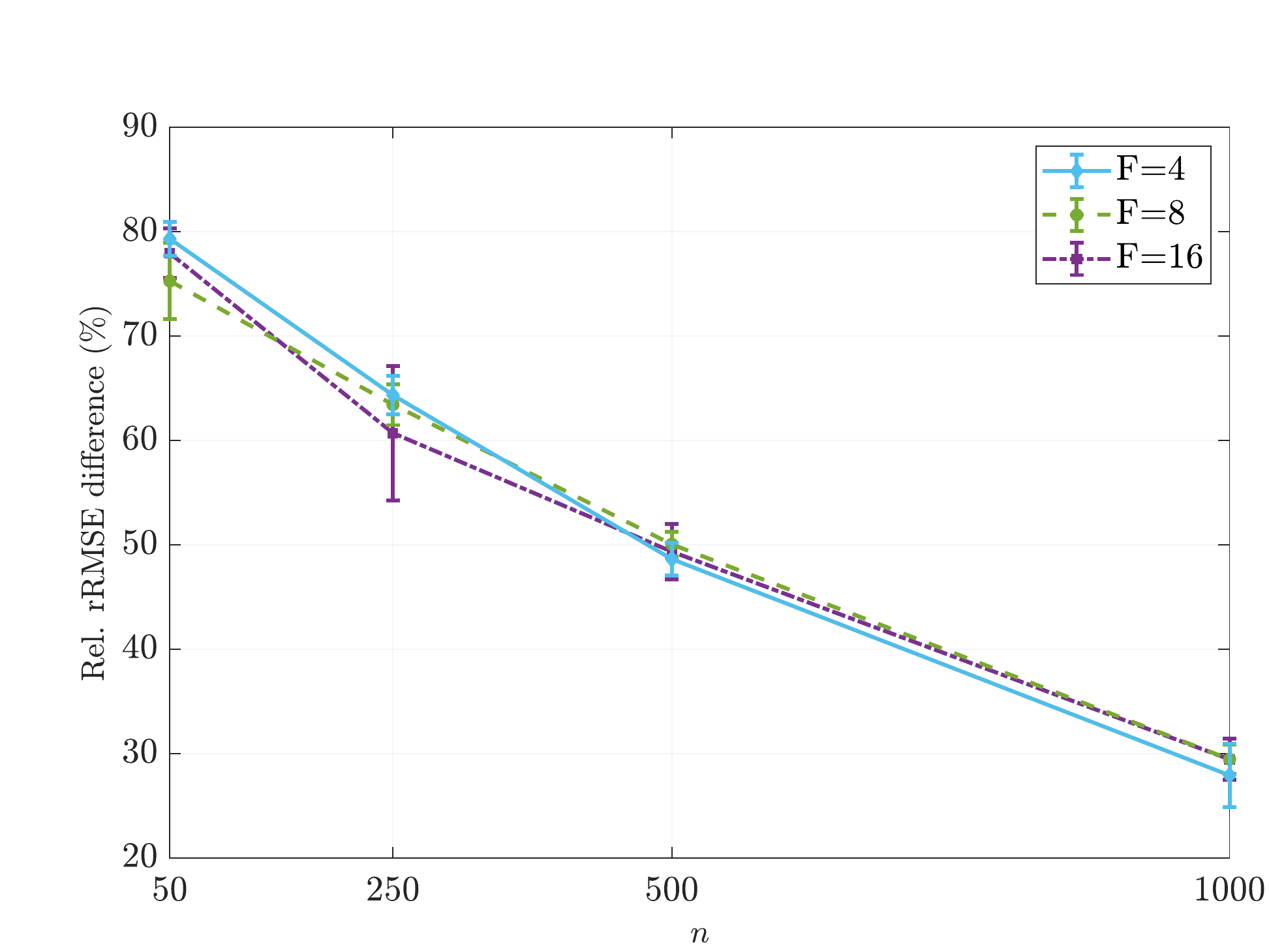}
  \subcaption{$K=4$, $L=1$}
  \label{features}
  \end{subfigure}
  ~ \hspace{0.1em}
    \begin{subfigure}[b]{0.31\textwidth}
  \centering
  \includegraphics[width=\linewidth]{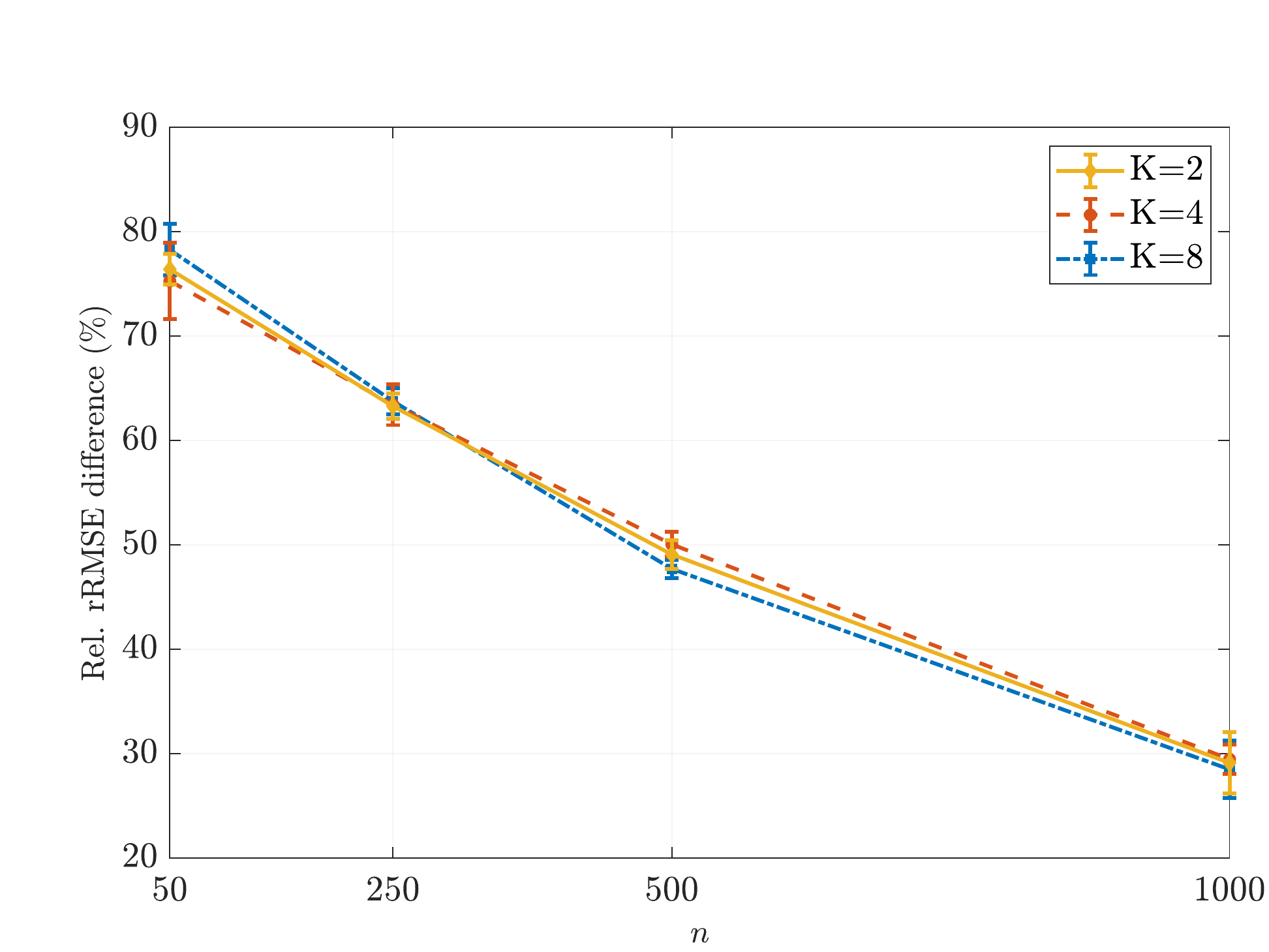}
  \subcaption{$F=8$, $L=1$}
  \label{taps}
  \end{subfigure}
  ~ \hspace{0.1em}
    \begin{subfigure}[b]{0.31\textwidth}
  \centering
  \includegraphics[width=\linewidth]{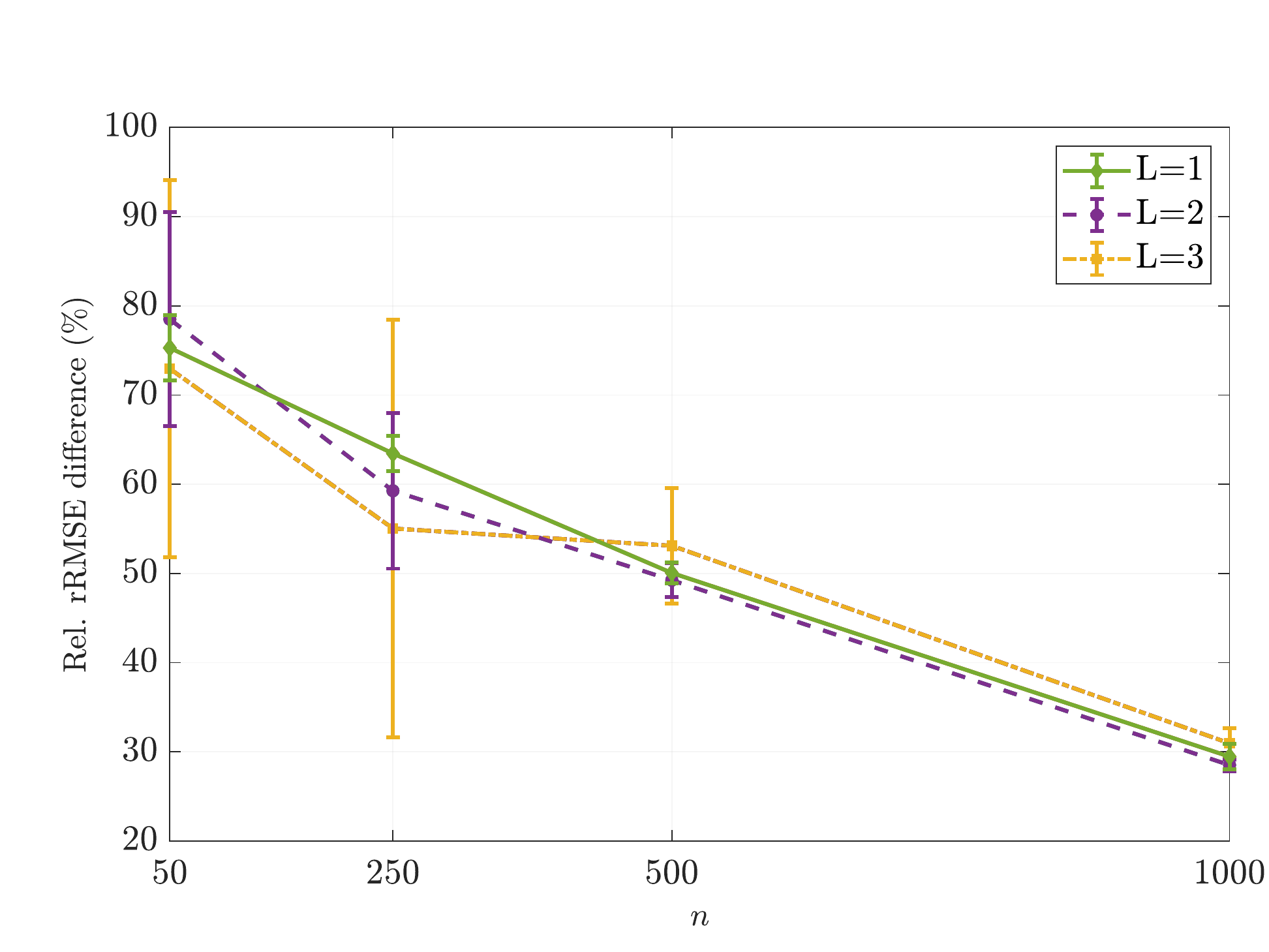}
  \subcaption{$F=8$, $K=4$}
  \label{layers}
  \end{subfigure}
  \caption{Relative difference between test rRMSEs achieved on the graphs $\bbG_n$ and $\bbG_N$ ($N=2000$) for different configurations of $F$, $K$ and $L$. Average and standard deviation over 3 graph realizations and 3 data realizations per graph. Error bars have been scaled by 1.5. (\subref{features}) Fixed $K$ and $L$, varying number of features $F$. (\subref{taps}) Fixed $F$ and $L$, varying number of filter taps $K$. (\subref{layers}) Fixed $F$ and $K$, varying number of layers $L$. }
\label{fig:adtl_exps}
\end{figure*}

\end{document}